\documentclass[a4paper,11pt]{article}
\usepackage[top=4cm, bottom=4cm, left=3cm, right=3cm]{geometry}

\usepackage[utf8]{inputenc} 
\usepackage[T1]{fontenc}    
\usepackage{url}            
\usepackage{amsfonts}       
\usepackage{nicefrac}       
\usepackage{microtype}      

\usepackage{amssymb}
\usepackage{amsmath}
\usepackage{amsthm}

\usepackage{thmtools, thm-restate}
\usepackage{mathrsfs} 
\usepackage[usenames,dvipsnames]{pstricks}
\usepackage{euler}
\usepackage{euscript}
\usepackage{graphicx}
\usepackage{charter}
\usepackage{mdframed}
\usepackage{enumerate, subfigure, color}
\usepackage[colorlinks,hyperindex]{hyperref}
\usepackage{bibunits}

\usepackage{newfloat}

\usepackage{booktabs}
\usepackage{cite}

\usepackage{floatrow}

\usepackage[font=small,skip=0pt]{caption}


\newcommand{\X}{\mathcal{X}}
\newcommand{\Y}{\mathcal{Y}}

\newcommand{\E}{\mathcal{E}}

\newcommand{\B}{\mathcal{B}}

\renewcommand{\H}{\mathcal{H}}

\newcommand{\HY}{{\H_\Y}}

\newcommand{\trans}{^{\scriptscriptstyle \top}}

\newcommand{\R}{\mathbb{R}}

\newcommand{\N}{\mathbb{N}}

\renewcommand{\L}{\mathcal{L}}

\newcommand{\C}{\mathcal{C}}

\newcommand{\la}{\lambda}

\newcommand{\loss}{\bigtriangleup}

\newcommand{\refref}[2]{#1.~\ref{#2}}

\newcommand{\secref}[1]{\refref{Sec}{#1}}

\renewcommand{\eqref}[1]{Eq.~(\ref{#1})}

\newcommand{\ip}[2]{\langle#1,#2\rangle}

\newcommand{\minimize}[1]{\underset{#1}{\rm minimize}~~}

\newcommand{\eqals}[1]{\begin{align*}#1\end{align*}}

\newcommand{\eqal}[1]{\begin{align}#1\end{align}}

\renewcommand{\eqals}[1]{\eqal{#1}}


\newcommand{\argmin}{\operatornamewithlimits{argmin}}


\declaretheorem[name=Theorem,refname=Thm.]{theorem}
\declaretheorem[name=Lemma,sibling=theorem]{lemma}

\declaretheorem[name=Remark]{remark}

\declaretheorem[name=Definition,refname=Def.,sibling=theorem]{definition}

\declaretheorem[name=Example]{example}


%
\title{\sffamily\huge\bf Consistent Multitask Learning with Nonlinear Output Relations}

\author{\hspace*{-1.2em} Carlo Ciliberto $^{*,1}$ ~~~ Alessandro Rudi $^{*,2}$ ~~~ Lorenzo Rosasco $^{2,3}$ ~~~ Massimiliano Pontil $^{1,4}$ \\ {\small \hspace*{-3.6em} c.ciliberto@ucl.ac.uk ~~~~~~ ale\_rudi@mit.edu ~~~~~~~~~~~~  lrosasco@mit.edu ~~~~~~~~~~~ m.pontil@cs.ucl.ac.uk} \and {\small ${}^*$Equal contribution} }

\begin{document}

\maketitle

\begin{abstract}
  \noindent Key to multitask learning is exploiting relationships between different tasks
  to improve prediction performance. If the relations
  are linear, regularization approaches can be used successfully. However, in practice assuming the tasks to be linearly related might be restrictive, and allowing for nonlinear structures is a challenge. In this paper, we tackle this issue by
  casting the problem within the framework of structured prediction.
  Our main contribution is a novel 
  algorithm for learning multiple tasks which are related by a system of nonlinear equations that their joint outputs need to satisfy. We show that the algorithm is consistent and can be efficiently implemented.
  Experimental results show the potential of the proposed method.
\end{abstract}
\begin{bibunit}[unsrt]
\section{Introduction}
\footnotetext[1]{University College London, London, UK}\footnotetext[2]{Laboratory for Computational and Statistical Learning - Istituto Italiano di Tecnologia, Genova, Italy \& Massachusetts Institute of Technology, Cambridge, MA 02139, USA.}\footnotetext[3]{Universit\'a degli Studi di Genova, Genova, Italy}\footnotetext[4]{Computational Statistics and Machine Learning - Istituto Italiano di Tecnologia, Genova, Italy}
Improving the efficiency of learning from human supervision is one of the great challenges in machine learning. Multitask learning is one of the key approaches in this sense and it is based on the assumption that different learning problems (i.e. tasks) are often related, a property that can be exploited to reduce the amount of data needed to learn each individual tasks and in particular to learn efficiently novel tasks (a.k.a. transfer learning, learning to learn \cite{thrun2012}). Special cases of multitask learning include vector-valued regression and multi-category classification; applications are numerous, including classic ones in geophysics, recommender systems, co-kriging or collaborative filtering (see \cite{AlvLawRos12,argyriou2007,pan2010} and references therein). Diverse methods have been proposed to tackle this problem, for examples based on  kernel methods \cite{micchelli2004}, sparsity 
approaches \cite{argyriou2007} or neural networks \cite{bishop2006}. 
Furthermore, recent theoretical results allowed to quantify the benefits of multitask learning from a generalization point view when considering specific methods \cite{maurer2013,maurer2016}. 

A common challenge for multitask learning approaches is the problem of incorporating prior assumptions on the task relatedness in the learning process. This can be done implicitly, as in neural networks \cite{bishop2006}, or explicitly,  as done in  regularization methods by  designing suitable regularizers \cite{micchelli2004}. This latter approach is flexible  enough to incorporate different notions of tasks' relatedness expressed, for example, in terms of  clusters or a graph, see e.g. \cite{evgeniou05,jacob08}. Further, it can be extended to  {\em learn} the tasks' structures when they are unknown \cite{zhang10,dinuzzo11,ciliberto2015,jawanpuria2015}. However, regularization approaches are currently limited to imposing, or learning, tasks structures expressed 
by linear relations. For example an adjacency matrix in the case of graphs or a block matrix in the case of clusters.  Clearly while such a restriction might make the problem more amenable to statistical and computational analysis, in practice it might be  a severe limitation. 

Encoding and exploiting nonlinear task relatedness is the problem we consider in this paper. Previous literature on the topic is scarce. Neural networks naturally allow to learn with nonlinear relations, however it is unclear whether such relations can be imposed a-priori. As explained below, our problem is related to that of manifold valued regression \cite{steinke2009}. Close to our perspective is \cite{agarwal2010}, where a different approach is proposed, implicitly enforcing a nonlinear structure on the problem by requiring the parameters of each task predictors to lie
on a shared manifold in the hypotheses space.

Our main contribution is a novel method for learning multiple tasks which are {\em nonlinearly} related. We address this problem from the perspective of structured prediction (see \cite{bakir2007,tsochantaridis2005} and references therein) and build upon ideas recently proposed in \cite{ciliberto2016}. Specifically we look at multitask learning as the problem of learning a vector-valued function with image contained in a prescribed set, which models tasks' interactions. We also discuss how to deal with possible violations of such a constraint set.  To our knowledge this is the first work addressing the problem of imposing nonlinear output relations among multiple tasks.
\section{Problem Formulation}
\label{sec:problem-formulation}
Multitask learning (MTL) studies the problem of estimating multiple (real-valued) functions
\eqals{
   f_1, \dots, f_T:\X\to\R
}
from corresponding training sets $(x_{it},y_{it})_{i=1}^{n_t}$ with $x_{it}\in\X$ and $y_{it}\in\R$, for $t=1,\dots,T$.
%
The basic idea in MTL is to estimate $f_1, \dots, f_T$ jointly, rather than independently. The intuition is that if the different tasks are {\em related} this strategy can lead to a substantial decrease of sample complexity, that is the amount of data needed to achieve a given accuracy. The crucial question is then how to encode and exploit the relations among the tasks.

Previous work on MTL has mostly focused on studying the case where the tasks are linearly related (see \secref{sec:previous-work}). Indeed, this allows to capture a wide range of relevant situations and the resulting problem can be often cast as a convex optimization one, which can be solved efficiently. However, it is not hard to imagine situations where different tasks might be nonlinearly related. As a simple example consider the problem of learning two functions $f_1,f_2:[0,2\pi]\to\R$, with $f_1(x) = \cos(x)$ and $f_2(x) = \sin(x)$. Clearly the two tasks are strongly related one to the other (they need to satisfy $f_1(x)^2 + f_2(x)^2 - 1 = 0$ for all $x\in[0,2\pi]$) but such structure in nonlinear (here an equation of degree $2$). More realistic examples can be found for instance in the context of dynamical systems, such as the case of a robot manipulator. A prototypical learning problem (see e.g. \cite{rasmussen2006}) is to associate the current state of the system (position, velocity, acceleration) to a variety of measurements (e.g. torques) that are nonlinearly related one to the other by physical constraints (see e.g. \cite{sciavicco1996}).  

Following the intuition above, in this work we model tasks relations as a set of $P$ equations. 
Specifically we consider a {\em constraint function} $\gamma:\R^T\to\R^P$ and require that $\gamma\left(f_1(x),\dots,f_T(x)\right) = 0$ for all $x\in\X$.
When $\gamma$ is linear, the problem reverts to linear MTL and can be addressed 
via standard approaches (see \secref{sec:previous-work}). On the contrary, 
the nonlinear case becomes significantly more challenging and it is not clear how to address it in general. 
The starting point of our study is to consider the tasks predictors as a vector-valued function $f=(f_1,\dots,f_T):\X\to\R^T$ but then observe that $\gamma$ imposes constraints on its range. Specifically, in this work we restrict $f:\X\to\C$ to take values in the {\em constraint set} 
\begin{equation}\label{eq:constraint-set}
\C = \left\{y \in \R^T \ | \ \gamma(y) = 0 \right\} ~ \subseteq ~ \R^T
\end{equation}
and formulate the nonlinear multitask learning problem as that of finding a good approximation $\widehat f:\X\to\C$ to the solution of the multi-task {\em expected risk} minimization problem
\eqal{\label{eq:expected_risk}
    \minimize{f:\X\to\C} \E(f), \qquad \E(f) = \frac{1}{T}\sum_{t=1}^T\int_{\X\times\R} \ell(f_t(x),y)d\rho_t(x,y)
}
where $\ell:\R\times\R\to\R$ is a prescribed loss function measuring prediction errors for each individual task and, for 
every $t=1,\dots,T$, $\rho_t$ is the distribution on $\X\times\R$ from which the training points $(x_{it},y_{it})_{i=1}^{n_t}$ have been independently sampled.

Nonlinear MTL poses several challenges to standard machine learning approaches. Indeed, when $\C$ is a linear space (e.g. $\gamma$ is a linear map) the typical strategy to tackle problem (\ref{eq:expected_risk}) is to minimize the {\em empirical risk} $\frac{1}{T}\sum_{t=1}^T\frac{1}{n_t}\sum_{i=1}^{n_t} \ell(f_t(x_{it}),y_{it})$
over some suitable space of hypotheses $f:\X\to\C$ within which 
optimization can be performed efficiently. However, if $\C$ is a nonlinear subset of $\R^T$, it is not clear how to parametrize a ``good'' space of functions since most basic properties typically needed by optimization algorithms are immediately lost (e.g. $f_1,f_2:\X\to\C$ does not necessarily imply $f_1 + f_2:\X\to\C$). To address this issue, in this paper we adopt the {\em structured prediction} perspective proposed in \cite{ciliberto2016}, which we review in the following.
\subsection{Background: Structured Prediction and the SELF Framework}
\label{sec:self-overview}
The term structured prediction typically refers to supervised learning problems with discrete outputs, such as strings or graphs~\cite{bakir2007,tsochantaridis2005,nowozin2011}. The framework in \cite{ciliberto2016} strays from this perspective to consider a more general formulation, which is akin to that at \eqref{eq:expected_risk}, namely to learn   an estimator for
\eqal{\label{eq:expected_struct}
  \minimize{f:\X\to\C} \int_{\X\times\Y} \L(f(x),y) d\rho(x,y)
}
given a training set $(x_i,y_i)_{i=1}^n$ sampled iid from an unknown distribution $\rho$ on $\X\times\Y$, where $\L:\Y\times\Y\to\R$ is a loss function. The output sets $\C\subseteq\Y$ are not assumed to be linear spaces but can be either discrete (e.g. strings, graphs, etc.) or dense (e.g. manifolds, distributions, etc.) sets of ``structured'' objects. This generalization will be crucial to tackle the question of multitask learning with nonlinear output relations in \secref{sec:algorithm} since it allows to consider the case where $\C$ is a generic subset of $\Y = \R^T$. The analysis in \cite{ciliberto2016} hinges on the assumption that the loss $\L$ is ``bi-linearizable'', namely
\begin{definition}[{\bf SELF}]\label{asm:self}
Let $\Y$ be a compact set. A function $\ell:\Y\times \Y\to\R$ is a {\em Structure Encoding Loss Function} if there exists a continuous feature map $\psi:\Y\to\H$, with $\H$ a reproducing kernel Hilbert space on $\Y$ and a continuous linear operator $V:\H\to\H$ such that for all $y,y'\in \Y$
\begin{equation}\label{eq:self}
\ell(y,y') = \ip{\psi(y)}{V\psi(y')}_\H.
\end{equation}
\end{definition}
%
In the original work the SELF definition was dubbed ``loss trick'' as a parallel to the {\em kernel trick}~\cite{scholkopf2002}.  As we discuss in \secref{sec:theory}, most MTL loss functions indeed satisfy the SELF property. Under the SELF assumption, it can be shown that a solution $f^*:\X\to\C$ to \eqref{eq:expected_struct} must satisfy
\eqal{\label{eq:struct_solution}
  f^*(x) = \argmin_{c\in\C} ~ \ip{\psi(c)}{V ~ g^*(x)}_\H \qquad \textrm{with} \qquad g^*(x) = \int_\Y \psi(y) ~d\rho(y|x)
} 
for all $x\in\X$ (see \cite{ciliberto2016} or the Appendix). 
Since $g^*:\X\to\H$ is a function taking values in a linear space, we can apply standard vector-valued regression techniques to learn a $\widehat g:\X\to\H$ to approximate $g^*$ given $(x_i,\psi(y_i))_{i=1}^n$ and then obtain the estimator $\widehat f:\X\to\C$ as
\eqal{\label{eq:self_approx}
  \widehat f(x) = \argmin_{c\in\C}~ \ip{\psi(c)~ }{V~\widehat g(x)}_\H.
}
The intuition here is that if $\widehat g$ is close to $g^*$, so it will be $\widehat f$ to $f^*$ (see \secref{sec:theory} for a rigorous analysis of such relation). In particular, if $\widehat g$ is the {\em kernel ridge regression} estimator obtained by minimizing the empirical risk $\frac{1}{n}\sum_{i=1}^n \|g(x_i) - \psi(y_i)\|_\H^2$ (plus regularization),~\eqref{eq:self_approx} becomes
\eqal{\label{eq:loss-trick}
  \widehat f(x) = \argmin_{c\in\C}~ \sum_{i=1}^n \alpha_i(x)\L(c,y_i), \qquad \alpha(x) = (\alpha_1(x),\dots,\alpha_n(x))\trans = (K + n\lambda I)^{-1}K_x
}
since $\widehat g$ has form $\widehat g(x) = \sum_{i=1}^n \alpha_i(x)~\psi(y_i)$ and the loss function $\L$ is SELF. In the above formula $\lambda>0$ is a hyperparameter, $I\in\R^{n \times n}$ the identity matrix, $K\in\R^{n \times n}$ the kernel matrix with elements $K_{ij} = k(x_i,x_j)$, $K_x\in\R^n$ the vector with entries $(K_x)_i = k(x,x_i)$ and $k:\X\times\X\to\R$ a reproducing kernel on $\X$.
%
The SELF structured prediction approach is therefore conceptually divided into two distinct phases: a {\em learning} step, where the score functions $\alpha_i:\X\to\R$ are estimated, which consists in solving the kernel ridge regression in $\widehat g$, followed by a {\em prediction} step, where the vector $c\in\C$ minimizing the weighted sum in \eqref{eq:loss-trick} is identified.  
Interestingly, while the feature map $\psi$, the space $\H$ and the operator $V$ allow to derive the SELF estimator, 
{\em their knowledge is not needed to evaluate $\widehat f(x)$ in practice} since the optimization at \eqref{eq:loss-trick} depends exclusively on the loss $\L$ and the score functions $\alpha_i$.
%
\section{Structured Prediction for Nonlinear MTL}
\label{sec:algorithm}
In this section we present the main contribution of this work, namely the extension of the SELF framework to the MTL setting. Furthermore, we discuss how to cope with possible violations of the constraint set in practice. We begin our analysis by applying SELF to vector-valued regression.

\subsection{Nonlinear Vector-valued Regression}
Vector-valued regression (VVR) is a special instance of MTL where for each input, all output examples are available during training. In other words, the training sets can be combined into a single dataset $(x_i,y_i)_{i=1}^n$, with $y_i = (y_{i1},\dots,y_{it})^\top\in\R^T$. If we denote $\L:\R^T\times\R^T\to\R$ the separable loss $\L(y,y') = \frac{1}{T}\sum_{t=1} \ell(y_t,y_t')$, nonlinear VVR coincides with the structured prediction problem in \eqref{eq:expected_struct}. If $\L$ is SELF, we can therefore obtain an estimator according to \eqref{eq:loss-trick}.
\begin{example}[Nonlinear VVR with Square Loss]\label{ex:ls-and-C}
Let $\L(y,y') = \sum_{t=1}^T (y_t - y_t')^2$. Then, an estimator for nonlinear VVR can be obtained as $\widehat f:\X\to\C$ from \eqref{eq:loss-trick} and corresponds to the projection
\begin{equation}\label{eq:ls-and-C-solution}
\widehat f(x) = \argmin_{c\in\C}~ \left\|c - b(x)/a(x)\right\|^2 = \Pi_\C \left(b(x)/a(x)\right)
\end{equation}
with $a(x) = \sum_{i=1}^n \alpha_i(x)$ and $b(x) = \sum_{i=1}^n \alpha_i(x)~y_i$. Interestingly, $b(x) = \sum_{i=1}^n \alpha_i(x) y_i = Y^\top (K + n \lambda I)^{-1} K_x$ corresponds to the solution of the standard vector-valued kernel ridge regression {\em without} constraints (we denoted $Y\in\R^{n \times T}$ the matrix with rows $y_i^\top$). Therefore, nonlinear VVR consists in: $1)$ computing the {\em unconstrained} kernel ridge regression estimator $b(x)$, $2)$ normalizing it by $a(x)$ and $3)$ projecting it onto $\C$.
\end{example}
The example above shows that for specific loss functions the estimation of $\widehat f(x)$ can be significantly simplified. In general, such optimization will depend on the properties of the constraint set $\C$ (e.g. convex, connected, etc.) and the loss $\ell$ (e.g. convex, smooth, etc.). In practice, if $\C$ is a  discrete (or discretized) subset of $\R^T$, the computation can be performed efficiently via a nearest neighbor search (e.g. using {\em k-d trees} based approaches to speed up computations \cite{cormen2009}). If $\C$ is a manifold, recent {\em geometric optimization} methods \cite{sra2016} (e.g. SVRG \cite{zhang16}) can be applied to find critical points of \eqref{eq:loss-trick}. We conclude by discussing a connection between nonlinear VVR and manifold regression.
\begin{remark}[Connection to Manifold Regression]\label{rem:manifold_regression}
When $\C$ is a Riemannian manifold, the problem of learning $f:\X\to\C$ shares some similarities to the {\em manifold regression} setting studied in \cite{steinke2009} (see also \cite{steinke2010} and references therein). Manifold regression can be interpreted as a vector-valued learning setting where outputs are constrained to be in $\C\subseteq\R^T$ and prediction errors are measured according to the {\em geodesic distance}. However, note that the two problems are also significantly different since, in MTL, noise could make output examples $y_i$ lie close but not exactly on the constraint set $\C$ and moreover, the loss functions used in MTL typically measure errors independently for each task (as in \eqref{eq:expected_risk}, see also~\cite{micchelli2004}) and rarely coincide with a geodesic distance.
\end{remark}
\subsection{Nonlinear Multitask Learning}\label{sec:nonlinear-mtl}
Differently from nonlinear vector-valued regression, the SELF approach introduced in \secref{sec:self-overview} cannot be applied to the MTL setting. Indeed, the SELF estimator at \eqref{eq:loss-trick} requires knowledge of all tasks outputs $y_i\in\Y = \R^T$ for every training input $x_i\in\X$ while in MTL we have a separate dataset $(x_{it},y_{it})_{i=1}^{n_t}$ for each task, with $y_{it}\in\R$ (this could be interpreted as the vector $y_i$ to have ``missing entries''). Therefore, in this work we extend the SELF framework to nonlinear MTL. We begin by proving a characterization of the minimizer $f^*:\X\to\C$ of the expected risk $\E(f)$ akin to \eqref{eq:struct_solution}.
\begin{restatable}{proposition}{PNLMTLSolution}\label{prop:nl-mtl-solution}
Let $\ell:\R\times\R\to\R$ be SELF, with $\ell(y,y') = \ip{\psi(y)}{V\psi(y')}_\H$. Then, the expected risk $\E(f)$ introduced at \eqref{eq:expected_risk} admits a measurable minimizer $f^*:\X\to\C$. Moreover, any such minimizer satisfies almost everywhere on $\X$
\eqal{\label{eq:nl-mtl-solution}
    f^*(x) = \argmin_{c\in\C} \sum_{t=1}^T ~\ip{\psi(c_t)}{V g_t^*(x)}_\H, \qquad \textrm{with} \qquad g_t^*(x) = \int_\R \psi(y) ~d\rho_t(y|x).
}
\end{restatable}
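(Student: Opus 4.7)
The plan is to adapt the structured-prediction argument of \cite{ciliberto2016} recalled at \eqref{eq:struct_solution} to the multitask risk \eqref{eq:expected_risk}. The proof proceeds in three conceptual steps: (i) use SELF together with a Bochner integral to rewrite each per-task summand of $\E(f)$ as $\ip{\psi(f_t(x))}{V g_t^*(x)}_\H$, (ii) assemble the $T$ summands into a single integral over $\X$ whose integrand depends on $f(x)$ only pointwise, and (iii) invoke a measurable selection theorem to realize the pointwise minimizer as a measurable function $f^*:\X\to\C$.

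For step (i), I disintegrate $d\rho_t(x,y) = d\rho_t(y|x)\,d\rho_{t,\X}(x)$ where $\rho_{t,\X}$ is the marginal on $\X$, and apply the SELF identity inside the $y$-integral. Because $\Y$ is compact in \defref{asm:self}, $\psi$ is bounded and measurable, hence the Bochner integral $g_t^*(x) = \int \psi(y)\,d\rho_t(y|x)$ is well defined, and continuity (thus boundedness) of $V$ lets one pull $\ip{\psi(f_t(x))}{V\cdot}_\H$ out of the $y$-integral. This yields
$$\E(f) \,=\, \frac{1}{T}\sum_{t=1}^T \int_\X \ip{\psi(f_t(x))}{V g_t^*(x)}_\H\,d\rho_{t,\X}(x).$$

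For step (ii), I fix a common dominating measure $\mu = \frac{1}{T}\sum_t \rho_{t,\X}$ with Radon--Nikodym densities $w_t = d\rho_{t,\X}/d\mu$, and define $F(x,c) = \sum_{t=1}^T w_t(x)\,\ip{\psi(c_t)}{V g_t^*(x)}_\H$, so that $\E(f) = \frac{1}{T}\int_\X F(x,f(x))\,d\mu(x)$. Under the standard MTL convention that the task marginals on $\X$ coincide, $w_t \equiv 1$ and $F$ matches the objective in \eqref{eq:nl-mtl-solution}; in the general case the $w_t$ can be absorbed into a rescaled $g_t^*$ without altering the argmin. A standard contradiction argument then shows that any measurable minimizer $f^*$ of $\E$ must satisfy $f^*(x) \in \argmin_{c\in\C} F(x,c)$ for $\mu$-almost every $x$, since otherwise modifying $f^*$ on the positive-$\mu$-measure set where pointwise optimality fails would strictly lower $\E$.

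For step (iii), existence of such a measurable $f^*$ follows from a Kuratowski--Ryll-Nardzewski-type selection theorem applied to the multifunction $x \mapsto \argmin_{c\in\C} F(x,c)$. The hypotheses to check are that $\C$ is compact (take $\C$ closed in the compact $\Y$), that $F(x,\cdot)$ is continuous in $c$ (from continuity of $\psi$ and boundedness of $V$), and that $x\mapsto F(x,c)$ is measurable for each $c$ (from measurability of the $g_t^*$, which follows from disintegration and Bochner integrability). The main obstacle is precisely this measurable-selection plus joint-measurability bookkeeping; the SELF manipulation and the pointwise reduction are otherwise routine. Together these three steps produce a measurable $f^*:\X\to\C$ that both minimizes $\E$ and obeys the pointwise characterization \eqref{eq:nl-mtl-solution}, and conversely every such minimizer satisfies \eqref{eq:nl-mtl-solution} almost everywhere.
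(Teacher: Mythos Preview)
Your proof is correct and follows essentially the same strategy as the paper: rewrite $\E(f)$ via the SELF identity as an integral over $\X$ of a Carath\'eodory integrand (the paper's Lemma~\ref{lemma:self-characterization-E}), then apply a measurable selection principle on the compact $\C$ to produce the pointwise minimizer (the paper's Lemma~\ref{lemma:fstar-minimize-E}). The only cosmetic differences are that the paper invokes Aumann's principle (Lemma~A.3.18 of \cite{steinwart2008}) rather than Kuratowski--Ryll-Nardzewski, and it assumes from the outset that all task marginals on $\X$ coincide with a common $\rho_\X$, so your dominating-measure construction with the $w_t$ is unnecessary here (though harmless, and your remark that under the common-marginal convention $w_t\equiv 1$ recovers exactly \eqref{eq:nl-mtl-solution} is precisely what the paper uses).
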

\autoref{prop:nl-mtl-solution} extends \eqref{eq:struct_solution} by relying on the linearity induced by the SELF assumption combined with {\em Aumann's principle} \cite{steinwart2008}, which guarantees the existence of a measurable selector $f^*$ for the minimization problem at \eqref{eq:nl-mtl-solution} (see Appendix). By following the strategy outlined in \secref{sec:self-overview}, we propose to learn $T$ independent functions $\widehat g_t:\X\to\H$, each aiming to approximate the corresponding $g_t^*:\X\to\H$ and then define $\widehat f:\X\to\C$ such that for every $x\in\X$
\eqal{\label{eq:nl-mtl-estimator-primal}
  \widehat f(x) = \argmin_{c\in\C} \sum_{t=1}^T \ip{~\psi(c_t)~}{~V ~\widehat g_t(x)~}_\H.
}
Here we consider the $\widehat g_t$ to be learned by solving the $T$ independent kernel ridge regression problems
\eqal{\label{eq:krr}
  \minimize{g\in\H\otimes\mathcal{G}}  \frac{1}{n_t} \sum_{i=1}^{n_t} ~ \|g(x_{it}) - \psi(y_{it})\|^2 + \lambda_t \|g\|_{\H \otimes \mathcal{G}}^2
}
for $t=1,\dots,T$, where $\mathcal{G}$ is a reproducing kernel Hilbert space on $\X$ associated to a kernel $k:\X\times\X\to\R$ and the candidate solution $g:\X\to\H$ is an element of $\H\otimes\mathcal{G}$. The following result shows that in this setting the evaluation of the estimator $\widehat f$ can be significantly simplified.
\begin{restatable}[The Nonlinear MTL Estimator]{proposition}{PLossTrick}\label{prop:nl-mtl-loss-trick}
Let $k:\X\times\X\to\R$ be a reproducing kernel with associated reproducing kernel Hilbert space $\mathcal{G}$. Let $\widehat g_t:\X\to\H$ be the solution of \eqref{eq:krr} for $t=1,\dots,T$. Then the estimator $\widehat f:\X\to\C$ defined at \eqref{eq:nl-mtl-estimator-primal} is equivalently characterized by
\eqal{\label{eq:estimator-multitask}
  \widehat f(x) = \argmin_{c\in\C}~ \sum_{t=1}^T\sum_{i=1}^{n_t} \alpha_{it}(x) \ell(c_t,y_{it}), \quad (\alpha_{1t}(x),\dots,\alpha_{n_t t}(x))^\top = (K_t + n_t \lambda_t I)^{-1} K_{tx}
}
for all $x\in\X$ and $t=1,\dots,T$, where $K_t\in\R^{n_t \times n_t}$ denotes the kernel matrix of the $t$-th task, namely $(K_t)_{ij} = k(x_{it},x_{jt})$, and $K_{tx}\in\R^{n_t}$ the vector with $i$-th component equal to $k(x,x_{it})$.
\end{restatable}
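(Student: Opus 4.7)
The plan is to show that the abstract estimator defined at \eqref{eq:nl-mtl-estimator-primal} coincides with the expression at \eqref{eq:estimator-multitask} by combining the closed-form solution of the vector-valued kernel ridge regression \eqref{eq:krr} with the SELF property of $\ell$. Since the objective in \eqref{eq:nl-mtl-estimator-primal} is a sum over the $T$ tasks and each $\widehat g_t$ only depends on the $t$-th dataset, it suffices to analyze a single task $t$ and then sum the contributions.

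First, I would derive the closed form of $\widehat g_t$. The regression problem \eqref{eq:krr} is an $\H$-valued kernel ridge regression in the tensor-product RKHS $\H \otimes \mathcal{G}$, whose reproducing kernel at $(x,x')$ is the operator $k(x,x')\,\mathrm{Id}_\H$. A standard representer-theorem argument (applicable since the loss term depends only on point evaluations and the regularizer is a strictly convex function of the norm) yields $\widehat g_t(x) = \sum_{i=1}^{n_t} \alpha_{it}(x)\,\psi(y_{it})$. Plugging this ansatz into \eqref{eq:krr} and solving the resulting finite-dimensional strictly convex quadratic problem in the coefficients gives $(\alpha_{1t}(x),\dots,\alpha_{n_t t}(x))^\top = (K_t + n_t \lambda_t I)^{-1} K_{tx}$. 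This matches exactly the coefficient vector appearing in \eqref{eq:estimator-multitask}.

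Next, I would exploit linearity and the SELF identity. For any $c \in \C$ and any $x \in \X$, using continuity/linearity of $V$ and of the inner product,
\eqals{
\ip{\psi(c_t)}{V \widehat g_t(x)}_\H
 = \Bigl\langle \psi(c_t),\; V \sum_{i=1}^{n_t} \alpha_{it}(x)\psi(y_{it}) \Bigr\rangle_\H
 = \sum_{i=1}^{n_t} \alpha_{it}(x)\,\ip{\psi(c_t)}{V\psi(y_{it})}_\H.
}
By \defref{asm:self}, $\ip{\psi(c_t)}{V\psi(y_{it})}_\H = \ell(c_t, y_{it})$, so summing over $t=1,\dots,T$ transforms the objective of \eqref{eq:nl-mtl-estimator-primal} into the double sum appearing in \eqref{eq:estimator-multitask}. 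Since the two objectives agree pointwise on $\C$, their argmins (as sets) coincide, proving the equivalence.

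The only genuinely delicate point is the representer step: because $\H$ may be infinite-dimensional, I would justify the finite expansion of $\widehat g_t$ by decomposing any candidate $g \in \H \otimes \mathcal{G}$ into its projection onto $\Span\{k(\cdot,x_{it})\,\psi(y_{it}) : i=1,\dots,n_t\}$ (which determines the data-fitting term through the reproducing property of $k$) plus an orthogonal residual that strictly increases the regularizer without changing the data fit—standard but worth stating carefully to ensure the expansion is valid in the vector-valued RKHS setting. Once this is in place, the rest of the argument is routine algebra relying solely on linearity and the SELF identity.
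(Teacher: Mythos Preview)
Your overall strategy is exactly the paper's: cite the closed form $\widehat g_t(x)=\sum_{i=1}^{n_t}\alpha_{it}(x)\psi(y_{it})$ for the kernel ridge regression solution (the paper defers this to \cite{ciliberto2016}, Lemma~17, for the infinite-dimensional case), then use linearity and the SELF identity to turn the inner-product objective into the double sum of losses.

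There is, however, a genuine error in the representer argument you sketch. Projecting onto $\Span\{k(\cdot,x_{it})\,\psi(y_{it}):i=1,\dots,n_t\}$ does \emph{not} determine the data-fitting term: a function $g=k(\cdot,x_{1t})\,h$ with $h\perp\psi(y_{1t}),\dots,\psi(y_{n_t t})$ lies in the orthogonal complement of that span, yet $g(x_{jt})=k(x_{jt},x_{1t})\,h\neq 0$ in general, so it contributes nontrivially to $\|g(x_{jt})-\psi(y_{jt})\|_\H^2$. The correct decomposition is $\H\otimes\mathcal G=(\H\otimes\mathcal G_0)\oplus(\H\otimes\mathcal G_0^\perp)$ with $\mathcal G_0=\Span\{k(\cdot,x_{it})\}$; the component in $\H\otimes\mathcal G_0^\perp$ vanishes at every training input (by the reproducing property of $k$), so the minimizer lies in $\H\otimes\mathcal G_0$, i.e.\ $\widehat g_t(x)=\sum_i k(x,x_{it})\,c_i$ for some $c_i\in\H$. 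Only \emph{after} solving the resulting finite-dimensional quadratic in the $c_i$ do you find that each $c_i$ is a linear combination of the $\psi(y_{jt})$, yielding the stated expansion with scalar coefficients $\alpha_{it}(x)$. Once this is fixed, the rest of your argument goes through verbatim.
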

\autoref{prop:nl-mtl-loss-trick} provides the estimator for nonlinear MTL and extends the SELF approach (indeed for VVR, \eqref{eq:estimator-multitask} reduces to the SELF estimator at \eqref{eq:loss-trick}). Interestingly, the proposed strategy learns the score functions $\alpha_{im}:\X\to\R$ separately for each task and then combines them in the joint minimization over $\C$. This can be interpreted as the estimator weighting predictions according to how ``reliable'' each task is on the input $x\in\X$. We make this intuition more clear in the following.
\begin{example}[Nonlinear MTL with Square Loss]\label{ex:ls-and-mtl}
Let $\ell$ be the square loss. Then, analogously to \autoref{ex:ls-and-C} we have that for any $x\in\X$, the multitask estimator at \eqref{eq:estimator-multitask} is
\eqal{
    \widehat f(x) = \argmin_{c\in\C}~ \sum_{t=1}^T a_t(x) \big(c_t - b_t(x)/a_t(x) \big)^2
}
with $a_t(x) = \sum_{i=1}^{n_t} \alpha_{it}(x)$ and $b_t(x) = \sum_{i=1}^{n_t} \alpha_{it}(x) y_{it}$, which corresponds to perform the projection $\widehat f(x) = \Pi_\C^{_{A(x)} }(w(x))$ of the vector  $w(x) = (b_1(x)/a_1(x),\dots,b_T(x)/a_T(x))$ according to the metric deformation induced by the matrix $A(x) = \textrm{diag}(a_1(x),\dots,a_T(x))$. This suggests to interpret $a_t(x)$ as a measure of confidence of task $t$ with respect to $x\in\X$. Indeed, tasks with small $a_t(x)$ will affect less the weighted projection $\Pi_\C^{_{A(x)} }$.
\end{example}
%
%
%
%
\subsection{Extensions: Violating $\C$}\label{sec:violating}

%
%
In practice, it is natural to expect the knowledge of the constraints set $\C$ to be not exact, for instance because of some inaccuracy when modeling it or because of noise.  
To address this issue, we consider 
two extensions of nonlinear MTL
that allow candidate predictors to slightly violate the constraints $\C$ and introduce a hyperparameter to control this effect.\\

\noindent{\bf Robustness~w.r.t.~perturbations of $\C$.} We soften the effect of the constraint set by requiring candidate predictors to take value {\em within} a radius $\delta>0$ from $\C$, namely $f:\X\to\C_\delta$ with
\begin{equation}\label{eq:ball}
\C_\delta = \{~ c + r ~|~ c\in\C, r\in\R^T, \|r\|\leq\delta ~ \}.
\end{equation} 
The scalar $\delta>0$ is now a hyperparameter ranging from $0$ ($\C_0 = \C$) to $+\infty$ ($\C_\infty = \R^T$).\\

\noindent{\bf Penalizing w.r.t. the distance from $\C$.} We can penalize predictions depending on their distance from the set $\C$ by introducing a perturbed version $\ell_\mu^t:\R^T\times\R^T\to\R$ of the original loss, that is
\begin{equation}\label{eq:loss}
\ell_\mu^t(y,z) = \ell(y_t,z_t) + \|z - \Pi_C(z)\|^2/\mu \qquad \textrm{for all} ~ y,z\in\R^T
\end{equation}
where $\Pi_\C:\R^T\to\C$ denotes the orthogonal projection onto $\C$ (see \autoref{ex:ls-and-C}). Below we report the closed-from solution for nonlinear vector-valued regression with square loss. For the sake of brevity, we address the MTL case in the Appendix.
\begin{example}[Vector-valued Regression and Violations of $\C$]\label{ex:ls-and-perturbations}
With the same notation as \autoref{ex:ls-and-C}, let $f_0:\X\to\C$ denote the solution at \eqref{eq:ls-and-C-solution} of nonlinear VVR with {\em exact} constraints, let $r = b(x)/a(x) - f_0(x)\in\R^T$. Then, the solutions to the problem with robust constraints $\C_\delta$ and perturbed loss function $\L_\mu = \frac{1}{T} \sum_t \ell_\mu^t$ are respectively (see Appendix)
\eqal{
    \widehat f_\delta(x) = f_0(x) + r ~ \min(1,\delta/\|r\|) \qquad \textrm{and} \qquad \widehat f_\mu(x) = f_0(x) + r ~ \mu/(1 + \mu).
}
\end{example}
%
\section{Generalization Properties of MTL with Nonlinear Constraints}\label{sec:theory}
We study here the generalization properties of the nonlinear MTL estimator. Our analysis revolves around the assumption that the loss function used to measure prediction errors is SELF. To this end we observe that most loss functions typically used in multitask learning are indeed SELF.
\begin{restatable}{proposition}{PSmooth}\label{teo:smooth}
Let $\bar\ell:[a,b]\to\R$ be differentiable almost everywhere with derivative Lipschitz continuous almost everywhere. Let $\ell:[a,b]\times[a,b]\to\R$ be such that $\ell(y,y') = \bar\ell(y-y')$ or $\ell(y,y') = \bar\ell(yy')$ for all $y,y'\in\R$. Then: $(i)$ $\ell$ is SELF and $(ii)$ the separable function $\L:\Y^T\times\Y^T\to\R$ such that $\L( y, y') = \frac{1}{T}\sum_{t=1}^{ _T} \ell(y_t, y_t')$ for all $y,y'\in\Y^T$ is SELF.
\end{restatable}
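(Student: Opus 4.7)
The plan is to establish (i) explicitly via Fourier-type expansions of $\bar\ell$, exploiting that the hypothesis (differentiable a.e.\ with Lipschitz-continuous-a.e.\ derivative) means $\bar\ell''\in L^\infty$ a.e., which forces sufficiently fast decay of Fourier coefficients. Part (ii) then follows from (i) by a direct-sum construction: given a SELF decomposition $\ell(y,y')=\langle\psi(y),V\psi(y')\rangle_\H$, I would define $\Psi(y)=(\psi(y_1),\dots,\psi(y_T))\in\H^{\oplus T}$ and $\widetilde V$ block-diagonal with blocks $V/T$; then $\langle\Psi(y),\widetilde V\Psi(y')\rangle=\frac{1}{T}\sum_t\ell(y_t,y'_t)=\L(y,y')$, and continuity of $\Psi$ together with boundedness of $\widetilde V$ are inherited directly. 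So the task reduces to (i).

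For the difference case $\ell(y,y')=\bar\ell(y-y')$, I would first extend $\bar\ell$ as a $C^{1,1}$ function to a larger interval and then periodize to $\R$ with period $L>2(b-a)$, so that $\bar\ell(y-y')$ is well-defined for all $y,y'\in[a,b]$. The periodic $\bar\ell$ admits the Fourier expansion $\bar\ell(t)=\sum_{n\in\Z} c_n e^{i\omega_n t}$ with $\omega_n=2\pi n/L$, and two integrations by parts against the essentially bounded $\bar\ell''$ produce the decay $|c_n|\le C/(1+n^2)$, hence $\sum_n|c_n|<\infty$. I would choose weights $w_n=1/(1+|n|)$ (so $\sum_n w_n^2<\infty$), take $\H_0=\ell^2(\Z,\CC)$, and set $\psi(y)_n=w_n e^{i\omega_n y}$, which is continuous and uniformly norm-bounded. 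Defining $V$ as the diagonal operator with entries $V_{n,n}=c_{-n}/w_n^2$, the bound $|V_{n,n}|\lesssim(1+|n|)^2/(1+n^2)$ gives uniform boundedness, and a direct computation (using the convention $\langle u,v\rangle=\sum_n\overline{u_n} v_n$ and the index change $n\mapsto-n$) verifies $\langle\psi(y),V\psi(y')\rangle=\sum_n c_n e^{i\omega_n(y-y')}=\bar\ell(y-y')$. I would then recast this in the SELF language by taking $k(y,y')=\langle\psi(y),\psi(y')\rangle$ and letting $\H$ be the RKHS of $k$, transporting the factorization via the canonical isometry from $\overline{\mathrm{span}}\{\psi(y):y\in[a,b]\}\subset\H_0$ onto $\H$.

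For the product case $\ell(y,y')=\bar\ell(yy')$, the cleanest reduction, valid when $[a,b]\subset(0,\infty)$, is the substitution $u=\log y$: since $yy'=e^{u+u'}$ and $\tilde\ell:=\bar\ell\circ\exp$ inherits $C^{1,1}$ regularity on $[\log a,\log b]$, the identity $\ell(y,y')=\tilde\ell(\log y+\log y')$ reduces to the difference case applied to $\tilde\ell$, with feature map $\psi_{\mathrm{prod}}(y)=\psi_{\mathrm{diff}}(\log y)$. In full generality (possibly $0\in[a,b]$), I would expand $\bar\ell$ in Fourier series on an interval containing the range of $yy'$ for $y,y'\in[a,b]$, writing $\ell(y,y')=\sum_n c_n e^{i\omega_n yy'}$, then decompose each continuous symmetric kernel $e^{i\omega_n yy'}$ on $[a,b]^2$ via its Schmidt/Mercer spectral expansion (whose Hilbert-Schmidt norm is uniformly bounded in $n$), and finally assemble these pieces into a single $(\psi,V)$ using the summability $\sum_n|c_n|<\infty$ to control the operator norm of $V$. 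This general product case is the main obstacle: a naive Taylor expansion $e^{i\omega_n yy'}=\sum_k((i\omega_n)^k/k!)\,y^k(y')^k$ would require analyticity of $\bar\ell$ for the double sum in $n$ and $k$ to converge, strictly stronger than $C^{1,1}$; the spectral argument bypasses this by operating at the operator-norm level rather than at the level of pointwise series.
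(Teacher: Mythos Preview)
The paper's own proof is a one-line citation: both (i) and (ii) are read off from Theorem~19 of \cite{ciliberto2016} (point~2 for (i); points~4 and~6 for (ii)). Your approach is therefore genuinely different---an explicit construction rather than an appeal to a prior structural result---and for (ii) and for the difference case of (i) your argument is correct and self-contained. The direct-sum construction for (ii) is exactly the content of ``sums of SELF functions are SELF''; the Fourier construction for $\bar\ell(y-y')$ is clean and delivers an explicit $(\psi,V)$, which the abstract route does not.

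The gap is in the general product case. Your Schmidt/Mercer assembly is not as innocent as the summability $\sum_n|c_n|<\infty$ suggests. The kernels $K_n(y,y')=e^{i\omega_n yy'}$ are neither Hermitian nor positive definite, so Mercer does not apply; a singular-value decomposition produces left/right singular systems that vary with $n$, and to place everything over a \emph{single} feature map $\psi$ with a bounded $V$ you need uniform-in-$n$ control of how each $K_n(\cdot,y')$ embeds into your target space, not merely of $\|K_n\|_{HS}$. In any RKHS that sees derivatives this embedding norm grows with $|\omega_n|$, and your sketch does not track the trade-off against the $1/n^2$ decay of $c_n$. A shorter route that handles both forms of $\ell$ at once, with no case split on the sign of $a$, is to bypass Fourier entirely: since $\bar\ell''\in L^\infty$, the mixed derivative $\partial_y\partial_{y'}\ell$ equals $-\bar\ell''(y-y')$ in the difference case and $\bar\ell'(yy')+yy'\,\bar\ell''(yy')$ in the product case, both bounded on $[a,b]^2$; hence $\ell$ lies in the tensor Sobolev RKHS $H^1([a,b])\otimes H^1([a,b])$, and any element of $\H\otimes\H$ is automatically of the form $\langle k(\cdot,y),Vk(\cdot,y')\rangle_\H$ for some Hilbert--Schmidt $V$.
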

Interestingly, most (mono-variate) loss functions used in multitask and supervised learning satisfy the assumptions of \autoref{teo:smooth}. Typical examples are the square loss $(y-y')^2$, hinge $\max(0,1 - yy')$ or logistic $\log(1 - \exp(-yy') )$. Indeed, the derivative of these functions with respect to $z = y-y'$ or $z = yy'$ exists almost everywhere and it is Lipschitz almost everywhere on compact sets.
%
%

The nonlinear MTL estimator introduced in \secref{sec:nonlinear-mtl} relies on the intuition that if for each $x\in\X$ the kernel ridge regression solutions $\widehat g_t(x)$ are close to the conditional expectations $g_t^*(x)$, then also $\widehat f(x)$ will be close to $f^*(x)$. The following result formally characterizes the relation between the two problems, proving what is often referred to as a {\em comparison inequality} in the context of surrogate frameworks \cite{bartlett2006}. Throughout the rest of this section we assume $\rho_t(x,y) = \rho_t(y|x)\rho_\X(x)$ for each $t=1,\dots,T$ and denote $\|g\|_{L^2(\X,\H,\rho_\X)}$ the $L^2$ norm of a function $g:\X\to\H$ according to $\rho_\X$.
\begin{restatable}[Comparison Inequality]{theorem}{TComparison}\label{teo:comparison-inequality}
With the same assumptions of \autoref{prop:nl-mtl-solution}, for $t=1,\dots,T$ let $f^*:\X\to\C$ and $g_t^*:\X\to\H$ be defined as in \eqref{eq:nl-mtl-solution}, let $\widehat g_t:\X\to\H$ be measurable functions and let $\widehat f:\X\to\C$ satisfy \eqref{eq:nl-mtl-estimator-primal}. Let $L^2 = L^2(\X,\H,\rho_\X)$ and $V^*$ be the adjoint of $V$. Then,
\eqal{\label{eq:comparison-inequality}
  \E(\widehat f) - \E(f^*) \leq  q_{\C,\ell,T} \sqrt{\frac{1}{T} \sum_{t=1}^T \|\widehat g_t - g_t^* \|^2_{L^2} }, \quad q_{\C,\ell,T}  = 2 \sup_{c\in\C} \sqrt{\frac{1}{T} \sum_{t=1}^T \|V^*\psi(c_t)\|_\H^2}.
}
\end{restatable}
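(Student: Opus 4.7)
The plan is to use the SELF representation to rewrite the expected risk as an integral of an inner product, then bound the excess risk pointwise in $x$ by exploiting the optimality of $\widehat f(x)$ for the empirical criterion and of $f^*(x)$ for the true criterion, and finally integrate and apply Jensen / Cauchy--Schwarz.

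First I would use the SELF property to represent
\[
\mathcal{E}(f) \;=\; \frac{1}{T}\sum_{t=1}^T \int_{\X} \bigl\langle \psi(f_t(x)),\, V g_t^*(x)\bigr\rangle_\H \, d\rho_\X(x),
\]
which follows by Fubini from $\ell(f_t(x),y)=\langle \psi(f_t(x)),V\psi(y)\rangle_\H$ and the definition of $g_t^*$. Define, for every $x \in \X$ and $c \in \C$,
\[
F^*(x,c) = \frac{1}{T}\sum_{t=1}^T \bigl\langle \psi(c_t), V g_t^*(x)\bigr\rangle_\H, \qquad \widehat F(x,c) = \frac{1}{T}\sum_{t=1}^T \bigl\langle \psi(c_t), V \widehat g_t(x)\bigr\rangle_\H,
\]
so that $\mathcal{E}(\widehat f) - \mathcal{E}(f^*) = \int_\X \bigl(F^*(x,\widehat f(x)) - F^*(x,f^*(x))\bigr)\,d\rho_\X(x)$, while by Proposition~\ref{prop:nl-mtl-solution} and the definition~\eqref{eq:nl-mtl-estimator-primal} we have $f^*(x) \in \argmin_{c \in \C} F^*(x,c)$ and $\widehat f(x)\in\argmin_{c\in\C}\widehat F(x,c)$ (almost everywhere).

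The key pointwise step is the standard ``add and subtract $\widehat F$'' trick: using $\widehat F(x,f^*(x))\ge \widehat F(x,\widehat f(x))$,
\[
F^*(x,\widehat f(x)) - F^*(x,f^*(x)) \;\le\; \bigl[F^*(x,\widehat f(x)) - \widehat F(x,\widehat f(x))\bigr] + \bigl[\widehat F(x,f^*(x)) - F^*(x,f^*(x))\bigr].
\]
Each bracket equals $\frac{1}{T}\sum_t \langle \psi(c_t), V(g_t^*(x)-\widehat g_t(x))\rangle_\H$ for some $c\in\C$, so by moving $V$ to its adjoint and applying Cauchy--Schwarz in both the RKHS and the sum over $t$, I bound each bracket by
\[
\sup_{c \in \C} \sqrt{\tfrac{1}{T}\sum_{t=1}^T \|V^*\psi(c_t)\|_\H^2}\;\cdot\;\sqrt{\tfrac{1}{T}\sum_{t=1}^T \|\widehat g_t(x)-g_t^*(x)\|_\H^2}.
\]
This yields a pointwise factor of $2$ in front, producing the constant $q_{\C,\ell,T}$.

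Finally, integrating over $x$ against $\rho_\X$ and applying Jensen's inequality (concavity of $\sqrt{\cdot}$) to push the square root outside the integral gives
\[
\mathcal{E}(\widehat f)-\mathcal{E}(f^*) \;\le\; q_{\C,\ell,T}\sqrt{\tfrac{1}{T}\sum_{t=1}^T \int_\X \|\widehat g_t(x)-g_t^*(x)\|_\H^2\,d\rho_\X(x)},
\]
which is precisely~\eqref{eq:comparison-inequality}. The only delicate point is measurability: one needs $x \mapsto F^*(x,\widehat f(x))$ and $x\mapsto \widehat F(x,\widehat f(x))$ to be measurable so that the integration is legitimate, but since $\widehat f$ is assumed to satisfy~\eqref{eq:nl-mtl-estimator-primal} the same Aumann measurable-selector argument invoked in Proposition~\ref{prop:nl-mtl-solution} applies. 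I do not expect any serious obstacle beyond this; the rest is a clean application of the SELF identity, optimality, and Cauchy--Schwarz/Jensen.
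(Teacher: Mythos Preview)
Your proposal is correct and follows essentially the same route as the paper: the paper also uses the SELF representation of $\mathcal E$, splits $\mathcal E(\widehat f)-\mathcal E(f^*)$ into the two terms $\mathcal E(\widehat f)-\bar{\mathcal E}(\widehat f)$ and $\bar{\mathcal E}(\widehat f)-\mathcal E(f^*)$ (where $\bar{\mathcal E}$ is built from the $\widehat g_t$), and bounds each via Cauchy--Schwarz in $\H$ and over $t$ together with Jensen. The only cosmetic difference is that you carry out the add--and--subtract argument pointwise in $x$ before integrating, whereas the paper performs the same decomposition at the integrated level and handles the second term via $\min_c \widehat F - \min_c F^* \le \sup_c |\widehat F - F^*|$; the resulting bounds coincide.
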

The comparison inequality at \eqref{eq:comparison-inequality} is key to study the generalization properties of the proposed estimator for nonlinear MTL and indeed shows that we can control the corresponding {\em excess risk} in terms of how well the kernel ridge regression solutions $\widehat g_t$ approximate the true $g_t^*$ (see Appendix for a proof of \autoref{teo:comparison-inequality}). 
\begin{restatable}{theorem}{Tuniversality}\label{teo:universality}
Let $\C\subseteq [a,b]^T$, let $\X$ be a compact set and $k:\X\times\X\to\R$ a continuous universal reproducing kernel\footnote{Standard assumption for universal consistency (see \cite{steinwart2008}). E.g. the Gaussian $k(x,x') = \exp({-\frac{\|x-x'\|^2}{\sigma^2}})$.}. Let $\ell:[a,b]\times[a,b]\to\R$ be a SELF. Let $\widehat f_N:\X\to\C$ denote the estimator at \eqref{eq:estimator-multitask} with $N = (n_1,\dots,n_T)$ training points independently sampled from $\rho_t$ for each task $t=1,\dots,T$ and $\lambda_t = n_t^{-1/4}$. Let $n_0 = \min_{1\leq t \leq  T} n_t$. Then, with probability $1$,
\begin{align}\label{eq:universality}
  \lim_{n_0\to+\infty} \E(\widehat f_N) = \inf_{f:\X\to\C}\E(f).
\end{align}
\end{restatable}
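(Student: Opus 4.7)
The plan is to combine the comparison inequality of \autoref{teo:comparison-inequality} with universal consistency of the $T$ vector-valued kernel ridge regressions $\widehat g_t$ that define the estimator.

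First I would check that the constant $q_{\C,\ell,T}$ in \eqref{eq:comparison-inequality} is finite. Because $\ell$ is SELF on the compact set $[a,b]$, the feature map $\psi$ is continuous and $V:\H\to\H$ is bounded; hence $c\mapsto\|V^*\psi(c)\|_\H$ is continuous on $[a,b]$, so bounded, and since $\C\subseteq[a,b]^T$ the supremum defining $q_{\C,\ell,T}$ is finite.

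Second, I would argue that $\|\widehat g_t-g_t^*\|_{L^2(\X,\H,\rho_\X)}\to 0$ almost surely as $n_t\to\infty$, for each $t$. Since $\psi$ is continuous on the compact $[a,b]$, the training outputs $\psi(y_{it})$ are uniformly bounded in $\H$, and the target $g_t^*(x)=\int\psi(y)\,d\rho_t(y|x)$ is a bounded measurable $\H$-valued function. Because $k$ is a continuous universal kernel on the compact $\X$, the tensor product space $\mathcal{G}\otimes\H$ is dense in $L^2(\X,\H,\rho_\X)$, so the approximation bias of the ridge estimator vanishes as $\lambda_t\to 0$; and because the targets $\psi(y_{it})$ are uniformly bounded in $\H$, a Bernstein-type concentration inequality for i.i.d.\ Hilbert-space-valued averages controls the estimation variance provided $n_t\lambda_t^2\to\infty$. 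Both conditions are met by $\lambda_t=n_t^{-1/4}$, and a Borel--Cantelli argument upgrades the resulting convergence in probability to almost sure convergence.

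Putting the two ingredients together, as $n_0=\min_t n_t\to\infty$ the average $\tfrac1T\sum_t\|\widehat g_t-g_t^*\|_{L^2}^2$ tends to $0$ almost surely, and the comparison inequality gives $\E(\widehat f_N)\to\E(f^*)=\inf_{f:\X\to\C}\E(f)$ almost surely, which is \eqref{eq:universality}. The main technical obstacle is the $\H$-valued ridge-regression step: because $\H$ is typically infinite dimensional, the problem cannot be reduced to finitely many scalar regressions and one must carry out the bias/variance analysis in an operator-theoretic framework, leveraging universality of $k$ to drive the approximation error to zero and a concentration inequality for bounded $\H$-valued random variables to control the estimation error. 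The rate $\lambda_t=n_t^{-1/4}$ is calibrated precisely to balance these two terms without requiring any a priori regularity or source condition on $g_t^*$.
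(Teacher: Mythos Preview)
Your proposal is correct and follows essentially the same route as the paper: reduce to the comparison inequality of \autoref{teo:comparison-inequality}, then invoke universal consistency of the $T$ independent $\H$-valued kernel ridge regressions $\widehat g_t\to g_t^*$ in $L^2$. The paper simply cites an external lemma (Lemma~18 of \cite{ciliberto2016}) for this last step, whereas you spell out the underlying bias/variance decomposition, the role of universality of $k$, the Bernstein-type concentration for bounded $\H$-valued summands, and the Borel--Cantelli upgrade to almost-sure convergence; your additional check that $q_{\C,\ell,T}<\infty$ is implicit in the paper but worth making explicit.
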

The proof of \autoref{teo:universality} relies on the comparison inequality in \autoref{teo:comparison-inequality}, which links the excess risk of the MTL estimator to the square error between $\hat g_t$ and $g^*_t$. Standard results from kernel ridge regression allow to conclude the proof~\cite{caponnetto2007} (see a more detailed discussion in the Appendix). By imposing further standard assumptions, we can also obtain generalization bounds on $\|\widehat g_t - g_t^*\|_{L^2}$ that automatically apply to nonlinear MTL again via the comparison inequality, as shown below.
\begin{restatable}{theorem}{Tbound}\label{teo:bound}
With the same assumptions of \autoref{prop:nl-mtl-solution} let $\widehat f_N:\X\to\C$ denote the estimator at  \eqref{eq:estimator-multitask} with $\lambda_t = n_t^{-1/2}$ and assume $g_t^*\in\H\otimes\mathcal G$, for all $t=1,\dots,T$. Then for any $\tau > 0$ we have, with probability at least $1 - 8 e^{-\tau}$, that
\eqal{\label{eq:bound}
    \E(\widehat{f}_{N}) - \inf_{f:\X\to\C} \E(f) ~~ \leq ~~ q_{\C,\ell,T} ~~ h_\ell ~~ \tau^2 ~~ n_0^{-1/4} \log T,
}
where $q_{\C,\ell,T}$ is defined as in \eqref{eq:comparison-inequality} and $h_\ell$ is a constant independent of~$\C, N, n_t, \la_t, \tau, T$.
\end{restatable}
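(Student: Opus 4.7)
The plan is to combine the comparison inequality of \autoref{teo:comparison-inequality} with a standard finite-sample bound for kernel ridge regression applied task-by-task, and then glue the per-task guarantees together with a union bound over the $T$ tasks.

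First, I would invoke \autoref{teo:comparison-inequality} to reduce the statement to a bound on the vector-valued regression errors:
\begin{equation*}
\E(\widehat f_N) - \inf_{f:\X\to\C}\E(f) \;\leq\; q_{\C,\ell,T}\,\sqrt{\frac{1}{T}\sum_{t=1}^T\|\widehat g_t - g_t^*\|_{L^2(\X,\H,\rho_\X)}^2}.
\end{equation*}
Notice that the ``universality'' issue (replacing $\E(f^*)$ with $\inf_f \E(f)$) is not an issue here, because $g_t^*\in\H\otimes\mathcal G$ by assumption, so the infimum is attained and the $f^*$ supplied by \autoref{prop:nl-mtl-solution} is indeed a minimizer of $\E$.

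Second, for each task $t$, the function $\widehat g_t$ is exactly the kernel ridge regression estimator for the vector-valued regression problem with targets $\psi(y_{it})\in\H$, scalar kernel $k$, and regularization $\lambda_t=n_t^{-1/2}$. Since the loss $\ell$ is SELF, the feature map $\psi$ is bounded on the compact output set, so $\psi(y_{it})$ are uniformly bounded random elements of $\H$. The source condition $g_t^*\in\H\otimes\mathcal G$ corresponds to the ``$r=1/2$'' regime of Caponnetto--De Vito~\cite{caponnetto2007}, for which the choice $\lambda_t=n_t^{-1/2}$ yields, with probability at least $1-8e^{-\tau_t}$,
\begin{equation*}
\|\widehat g_t - g_t^*\|_{L^2(\X,\H,\rho_\X)} \;\leq\; c_\ell\,\tau_t^2\,n_t^{-1/4},
\end{equation*}
where $c_\ell$ depends only on $\|\psi\|_\infty$, $\|g_t^*\|_{\H\otimes\mathcal G}$, and $\sup_x k(x,x)$, and can be taken uniform in $t$ under the standing assumptions.

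Third, I would apply a union bound over $t=1,\dots,T$ by rescaling the confidence parameter: taking $\tau_t=\tau+\log T$ for every task yields a global event of probability at least $1-8e^{-\tau}$ on which every per-task bound holds. On that event,
\begin{equation*}
\frac{1}{T}\sum_{t=1}^T \|\widehat g_t - g_t^*\|_{L^2}^2 \;\leq\; c_\ell^{\,2}\,(\tau+\log T)^{4}\,n_0^{-1/2}.
\end{equation*}
Plugging this into the comparison inequality, taking the square root, and bounding $(\tau+\log T)^2\leq h'_\ell\,\tau^2\log T$ (valid for $\tau\ge 1$ after absorbing absolute constants into $h_\ell$) produces the stated inequality with the claimed constant $h_\ell$ depending only on the loss and kernel.

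The main obstacle is in the second step: the cited rate requires casting the $\H$-valued regression in a form where an off-the-shelf KRR bound applies, which means verifying that SELF implies enough regularity on $\psi$ (continuity, boundedness on the compact output set) so that $\EE\|\psi(y)\|_\H^2<\infty$ and the noise conditions of~\cite{caponnetto2007} are satisfied, and that the hypothesis $g_t^*\in\H\otimes\mathcal G$ correctly encodes the $r=1/2$ source condition in the vector-valued setting. The rest is then a bookkeeping exercise to track how the constants depend on $T$ through the union bound, which produces precisely the $\log T$ factor in the final rate.
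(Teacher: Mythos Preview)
Your proposal is correct and follows essentially the same route as the paper: invoke the comparison inequality of \autoref{teo:comparison-inequality}, apply a per-task KRR rate under the source condition $g_t^*\in\H\otimes\mathcal G$ (the paper cites \cite{ciliberto2016} in place of \cite{caponnetto2007}, but the content is the same $n_t^{-1/4}$ bound), and then combine the $T$ events by a union bound with the confidence parameter shifted by a $\log T$ factor. The only cosmetic difference is that the paper takes the per-task confidence $\eta=\tau\log T$ rather than your $\tau_t=\tau+\log T$; your choice is in fact the cleaner one for recovering the exact probability $1-8e^{-\tau}$.
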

%
The the excess risk bound in \autoref{teo:bound} is comparable to that in \cite{ciliberto2016} (Thm.~$5$). To our knowledge this is the first result studying the generalization properties of a learning approach to MTL with 
constraints.\\

\noindent{\bf Benefits of Nonlinear MTL}. The rates in \autoref{teo:bound} strongly depend on the constraints $\C$ via the constant $q_{\C,\ell,T}$. The following result studies two special cases that allow to appreciate this effect.
\begin{restatable}{lemma}{LQForSphere}\label{lem:q-for-sphere}
Let $B\geq1$, $\mathcal B = [-B,B]^T$, $\mathcal S \subset\R^T$ be the sphere of radius $B$ centered at the origin and let $\ell$ be the square loss. Then $q_{\mathcal B,\ell,T} \leq 2\sqrt{5} ~ B^2$ and $q_{\mathcal S,\ell,T} \leq 2\sqrt{5} ~ B^2 ~T^{-1/2}$.
\end{restatable}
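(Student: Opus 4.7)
My plan is to compute $\|V^*\psi(c)\|_\H^2$ for an explicit SELF decomposition of the square loss and then reduce each claim to an elementary inequality for a polynomial in $c$ restricted to $\mathcal B$ or to $\mathcal S$. Writing $(y-y')^2 = y^2 - 2yy' + y'^2$ suggests the canonical choice $\psi(y) = (1,y,y^2)^\top \in \R^3$ together with the symmetric operator $V$ satisfying $V\psi(y') = (y'^2,-2y',1)^\top$; one verifies directly that $\ip{\psi(y)}{V\psi(y')} = (y-y')^2$. After an appropriate normalization of the inner product on $\H$ (which is allowed because the comparison inequality is invariant under the rescaling $\psi \mapsto \lambda \psi$, $V \mapsto V/\lambda^2$, up to the compensating change in $\|\widehat g_t - g_t^*\|$), I would arrange that $\|V^*\psi(c)\|_\H^2 \le c^4 + 4c^2$. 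This single polynomial in $c$ is the object I need to bound over $\mathcal B$ and over $\mathcal S$.

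For the box $\mathcal B = [-B,B]^T$, each coordinate satisfies $c_t \in [-B,B]$, so $c_t^4 + 4c_t^2 \le B^4 + 4B^2$. Using $B \ge 1$ (hence $B^2 \le B^4$), this is $\le 5B^4$. Substituting into the definition of $q$ and taking the supremum yields $q_{\mathcal B,\ell,T}^2 \le 4\cdot 5B^4 = 20B^4$, so $q_{\mathcal B,\ell,T} \le 2\sqrt 5\, B^2$. For the sphere $\mathcal S$, the key inequality is
\[
\sum_{t=1}^T c_t^4 \;\le\; \Big(\max_t c_t^2\Big)\Big(\sum_{t=1}^T c_t^2\Big) \;\le\; B^2 \cdot B^2 \;=\; B^4,
\]
which uses the sphere constraint $\sum_t c_t^2 = B^2$ together with $\max_t c_t^2 \le B^2$. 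Combining with $\sum_t c_t^2 = B^2$ and $B \ge 1$ gives $\sum_t (c_t^4 + 4c_t^2) \le B^4 + 4B^2 \le 5B^4$. Dividing by $T$ and plugging into the definition of $q$ yields $q_{\mathcal S,\ell,T}^2 \le 20 B^4 / T$, i.e., $q_{\mathcal S,\ell,T} \le 2\sqrt 5\, B^2 T^{-1/2}$.

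The hard part is the first step. With the naive $\psi(y) = (1,y,y^2)$ one computes $\|V^*\psi(c)\|_\H^2 = c^4 + 4c^2 + 1$; the extra constant $+1$, once summed over the $T$ tasks, contributes an additive $T$ to the sphere estimate that would ruin the $T^{-1/2}$ scaling. Eliminating (or uniformly absorbing) this constant through the choice of Hilbert structure on $\H$ is the genuine technical ingredient of the argument, and it is what separates the sphere bound from the box bound. Once that is in place, the remaining steps are the elementary estimates $B^2 \le B^4$ from $B \ge 1$ for the box, and $\sum_t c_t^4 \le (\max_t c_t^2)(\sum_t c_t^2)$ for the sphere, the latter being the precise mechanism by which the $T^{-1/2}$ factor appears.
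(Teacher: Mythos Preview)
Your downstream estimates for the box and the sphere are exactly what the paper does: for $\mathcal B$ bound each coordinate by $c_t^4+4c_t^2\le B^4+4B^2\le 5B^4$, and for $\mathcal S$ use $\sum_t c_t^4\le(\max_t c_t^2)\sum_t c_t^2=B^4$ together with $\sum_t c_t^2=B^2$ to get $\sum_t(c_t^4+4c_t^2)\le 5B^4$, then divide by $T$. So once $\|V^*\psi(c)\|_\H^2=c^4+4c^2$ is in hand, your argument and the paper's coincide.

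The gap is precisely where you flag it, and the mechanism you propose does not close it. With $\psi(y)=(1,y,y^2)$ the operator $V$ realizing $(y-y')^2=\ip{\psi(y)}{V\psi(y')}$ is \emph{forced} (the vectors $\psi(y')$ span $\R^3$), and it gives $\|V^*\psi(c)\|^2=c^4+4c^2+1$. A rescaling $\psi\mapsto\lambda\psi$, $V\mapsto V/\lambda^2$ multiplies this by $\lambda^{-2}$; the constant is scaled, not removed, and at $c=0$ the value is still positive. More generally, passing to an inner product $\ip{u}{v}_M=u^\top M v$ on $\H=\R^3$ and adjusting $V$ so that the SELF identity is preserved yields $\|V^*\psi(c)\|_M^2=(V_0\psi(c))^\top M^{-1}(V_0\psi(c))$ with $V_0\psi(c)=(c^2,-2c,1)^\top$; killing the constant would require the $(3,3)$ entry of $M^{-1}$ to vanish, impossible for $M^{-1}$ positive definite. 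So no ``normalization of the inner product'' produces $\|V^*\psi(c)\|_\H^2\le c^4+4c^2$.

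The paper's device is different and is the missing idea: since $\int y^2\,d\rho_t(x,y)$ does not depend on $f$, the square-loss excess risk equals the excess risk for the \emph{asymmetric} loss $\tilde\ell(y,y')=y^2-2yy'$, so one may run the comparison inequality (and hence compute $q_{\C,\ell,T}$) with a SELF decomposition of $\tilde\ell$ instead. Keeping $\psi(y)=(y^2,y,1)$ but taking the non-self-adjoint $V$ with rows $(0,0,1)$, $(0,-2,0)$, $(0,0,0)$ gives $\ip{\psi(y)}{V\psi(y')}=y^2-2yy'$ and $V^*\psi(c)=(0,-2c,c^2)^\top$, hence $\|V^*\psi(c)\|^2=4c^2+c^4$ with no constant term. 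After this substitution your box and sphere computations go through verbatim.
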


To explain the effect of $\C$ on MTL, define $n=\sum_{t=1}^{_T} n_t$ and assume that $n_0 = n_t = n/T$.
\autoref{lem:q-for-sphere} together with Thm.~\ref{teo:bound} shows that when the tasks are assumed not to be related (i.e.~$\C = \B$) the learning rate of nonlinear MTL is of $\widetilde{O}((\frac{T}{n})^{1/4})$, as if the tasks were learned independently. On the other hand, when the tasks have a relation (e.g. $\C = \mathcal S$, implying a quadratic relation between the tasks) 
nonlinear MTL achieves a learning rate of $\widetilde{O}((\frac{1}{n T})^{1/4})$, which improves as the number of tasks increases and as the total number of observed examples increases. Specifically, for $T$ of the same order of $n$, we obtain a rate of $\widetilde{O}(n^{-1/2})$ which is comparable to the optimal rates available for kernel ridge regression {\em with only one task trained on the total number $n$ of examples}~\cite{caponnetto2007}. This observation corresponds to the intuition that if we have many related tasks with few training examples each, we can expect to achieve significantly better generalization by taking advantage of such relations rather than learning each task independently.
\section{Connection to Previous Work: Linear MTL}\label{sec:previous-work}
In this work we formulated the nonlinear MTL problem as that of learning a function $f:\X\to\C$ taking values in a set 
of constraints $\C\subseteq\R^T$ {\em implicitly} identified by a set of equations $\gamma(f(x)) = 0$. An alternative approach would be to characterize the set $\C$ via an {\em explicit} parametrization $\theta:\R^Q\to\C$,~for $Q\in \N$,~so that the multitask predictor can be decomposed as $f = \theta \circ h$, with $h:\X\to\R^Q$. We can learn $h:\X\to\R^Q$ using empirical risk minimization strategies such as Tikhonov regularization, 
\eqal{ \label{eq:alternative-formulation}
    \minimize{ \substack{ h = (h_1,\dots,h_Q) \in \H^Q} } ~~ \frac{1}{n} \sum_{i=1}^{n} \L(\theta \circ h(x_i), y_i) + \lambda \sum_{q=1}^Q \|h_q\|_\H^2
}
since candidate $h$ take value in $\R^Q$ and therefore $\H$ can be a standard linear space of hypotheses. However, while \eqref{eq:alternative-formulation} is interesting from the modeling standpoint, it also poses several problems: $1$) $\theta$ can be nonlinear or even non-continuous, making \eqref{eq:alternative-formulation} hard to solve in practice even for $\L$ convex; $2$) $\theta$ is not uniquely identified by $\C$ and therefore different parametrizations may lead to very different $\widehat f = \theta \circ \widehat h$, which is not always desirable; $3$) There are few results on empirical risk minimization applied to generic loss functions $\L(\theta (\cdot), \cdot)$ (via so-called oracle inequalities, see \cite{steinwart2008} and references therein), and it is unclear what generalization properties to expect in this setting. A relevant exception to the issues above is the case where $\theta$ is {\em linear}. In this setting \eqref{eq:alternative-formulation} becomes more amenable to both computations and statistical analysis and indeed most previous MTL literature has been focused on this setting, either by designing ad-hoc output metrics \cite{sindhwani13}, linear output encodings \cite{fergus10} or regularizers \cite{micchelli2004}. Specifically, in this latter case the problem is cast as that of minimizing the functional 
\eqal{\label{eq:matrix-A}
\minimize{ \substack{ f = (f_1,\dots,f_T) \in \H^T } } ~~ \sum_{i=1}^n \L(f(x_{i}),y_{i}) +\la \sum_{t,s = 1}^T A_{ts} \ip{f_t}{f_s}_\H
}
where the psd matrix $A=(A_{ts})_{s,t=1}^T$ 
encourages linear relations between the tasks. It can be shown that 
this problem is equivalent to \eqref{eq:alternative-formulation} when the $\theta\in\R^{T \times Q}$ is linear and $A$ is set to the pseudoinverse of $\theta\theta^\top$. As shown in \cite{ciliberto2015}, a variety of situations are recovered considering the approach above, such as the case where tasks are centered around a common average ~\cite{evgeniou05}, clustered in groups \cite{jacob08} or sharing the same subset of features \cite{argyriou2007,obozinski2010}. Interestingly, the above framework can be further extended to estimate the structure matrix $A$ directly from data, an idea initially proposed in \cite{dinuzzo11} and further developed in \cite{AlvLawRos12,ciliberto2015,jawanpuria2015}.


\begin{figure}[t]
\centering
\qquad\quad~~~\includegraphics[width=0.2\columnwidth]{./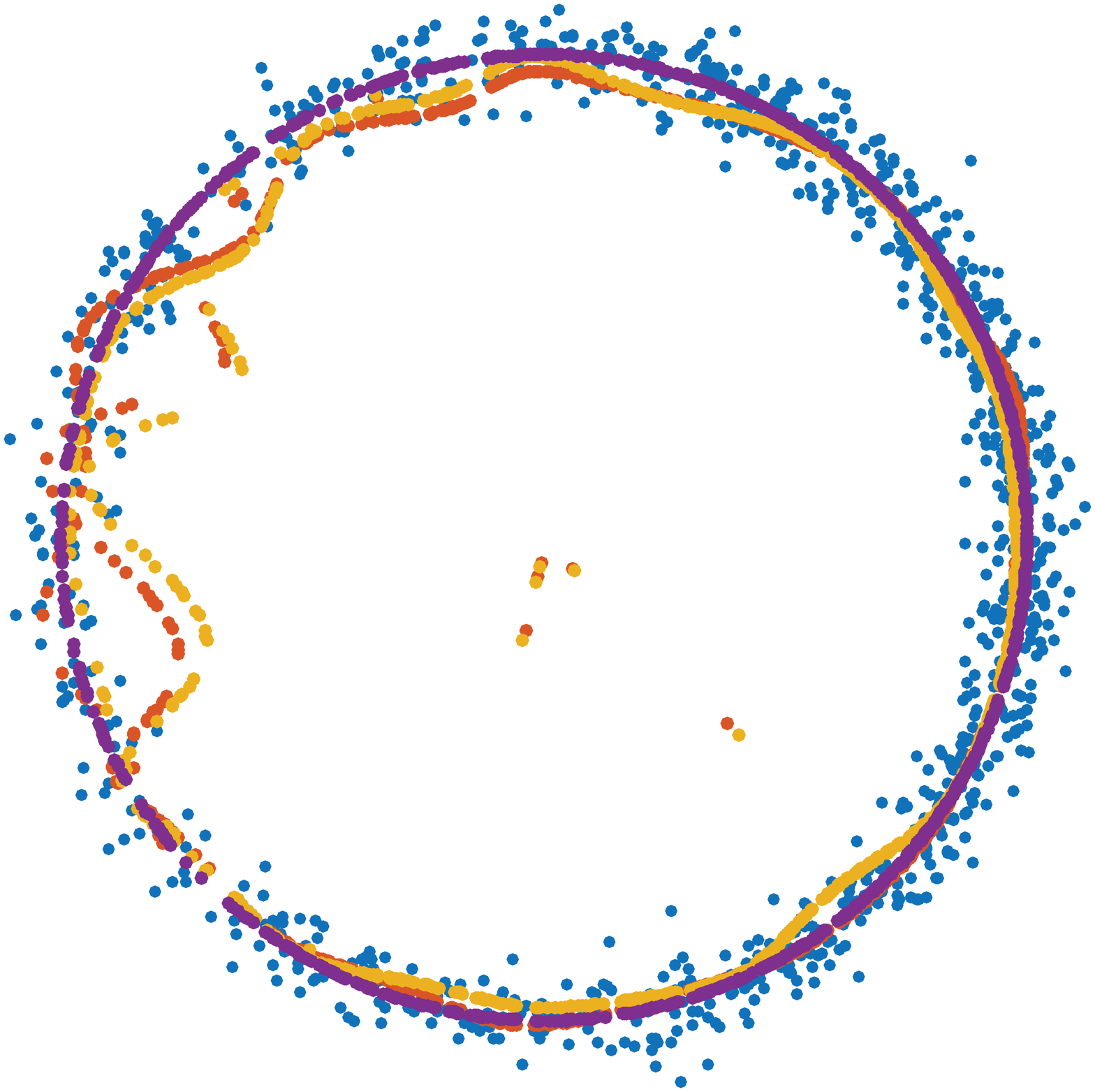}\qquad\qquad\qquad\qquad\qquad\qquad%
\includegraphics[width=0.27\columnwidth]{./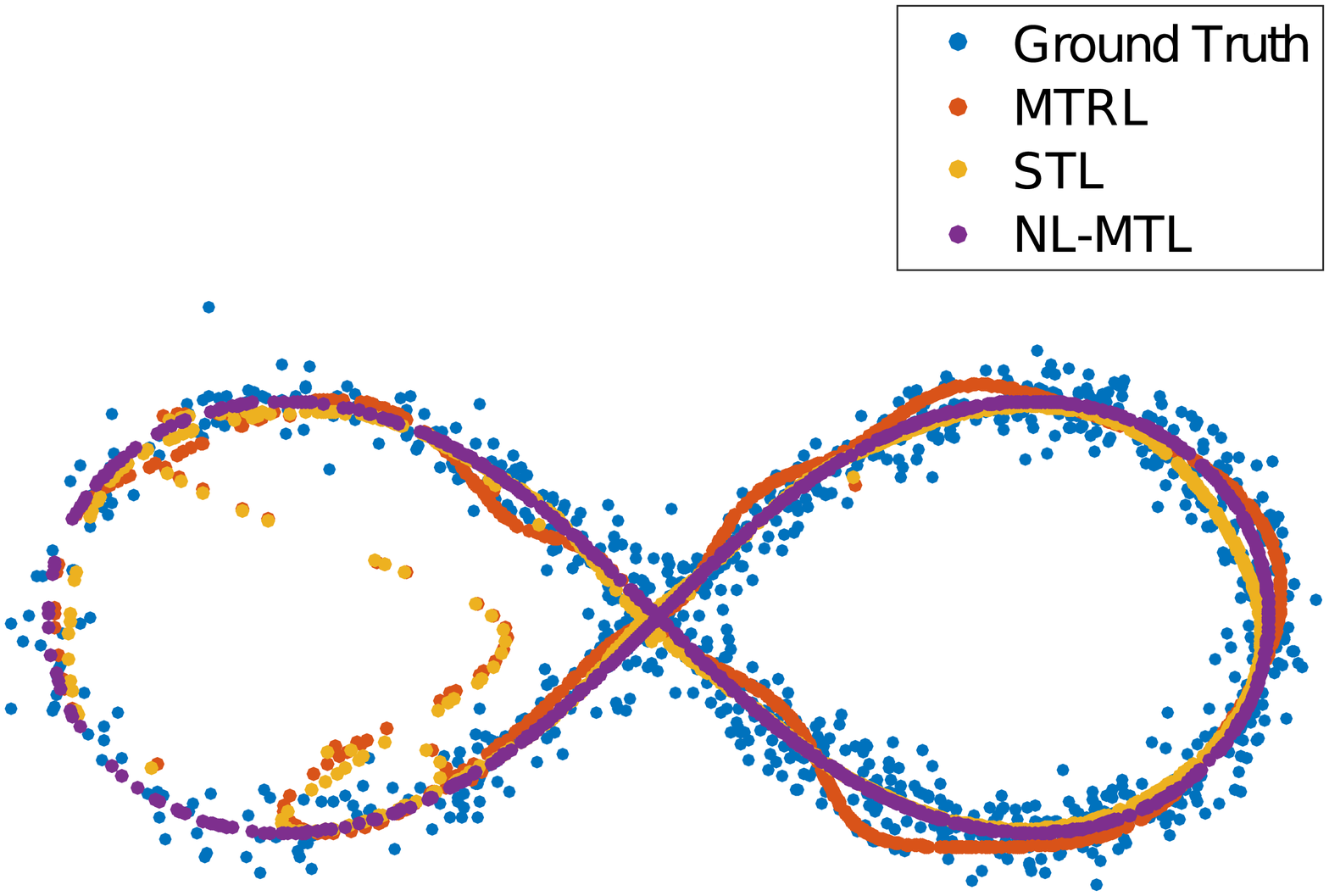}

\includegraphics[width=0.43\columnwidth]{./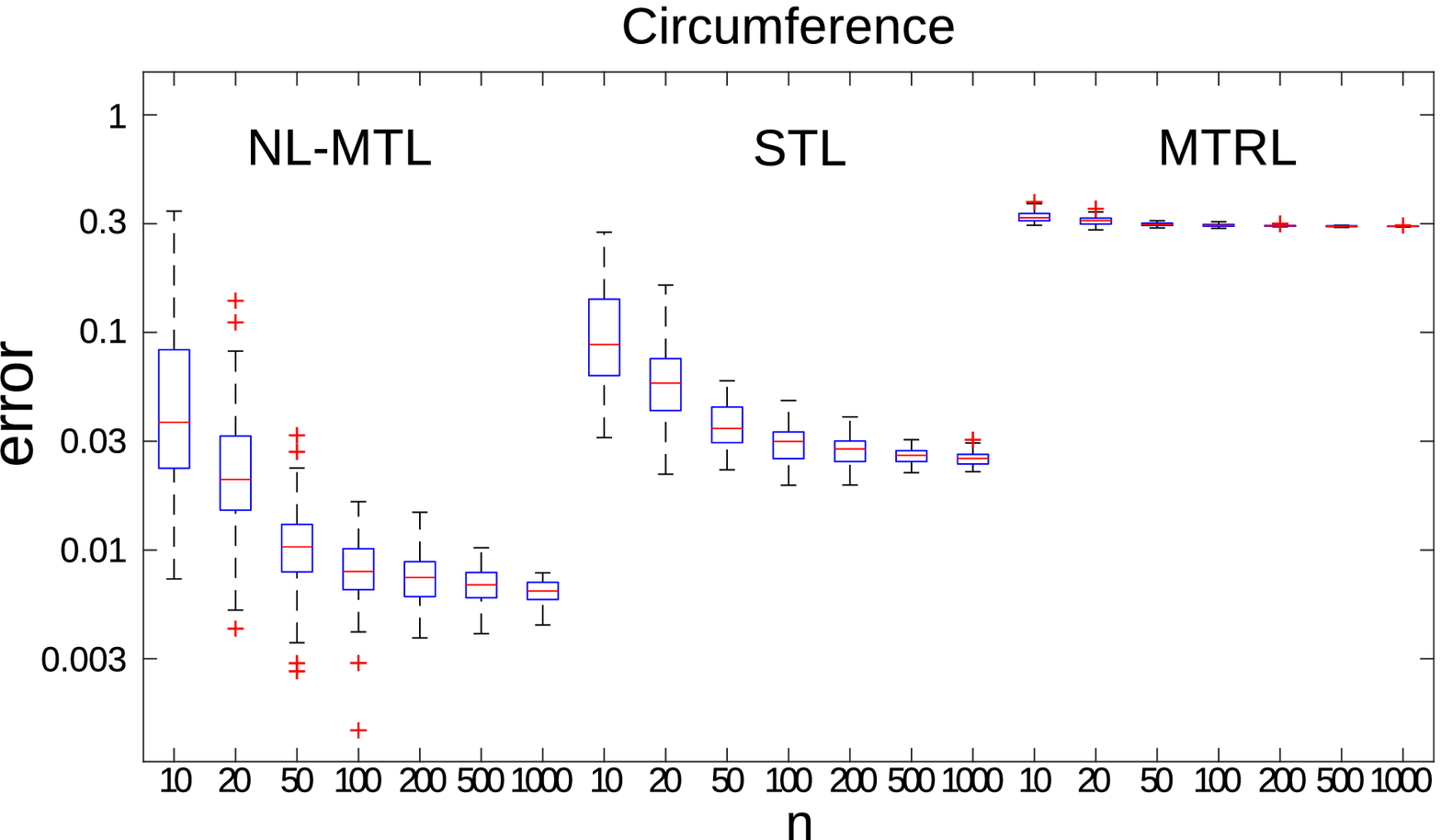}\qquad%
\qquad\includegraphics[width=0.43\columnwidth]{./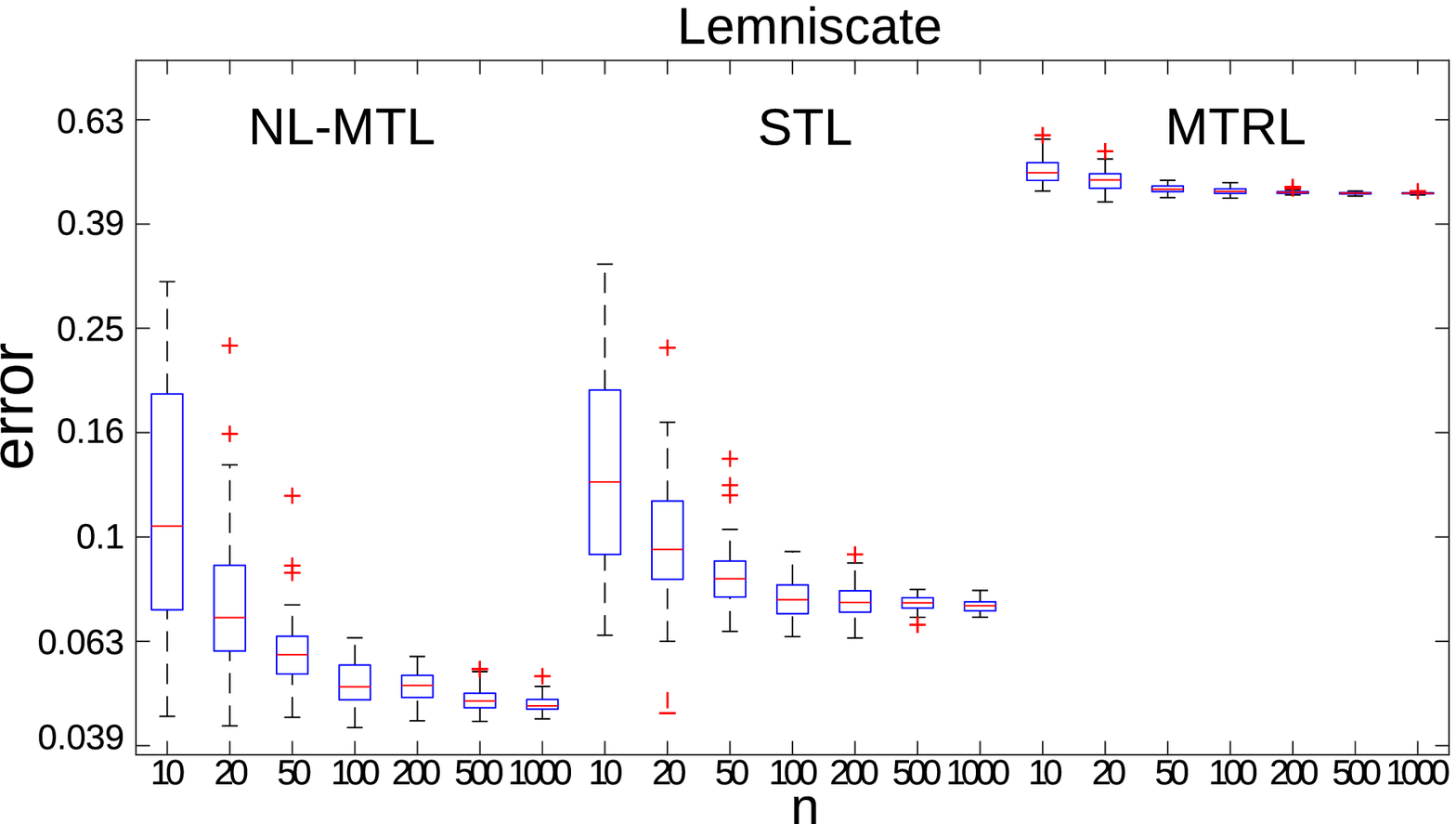}\qquad\qquad
\caption{({\bf Bottom}) MSE (logaritmic scale) of MTL methods for learning constrained on a circumference (Left) or a Lemniscate (Right). Results are reported in a boxplot across $10$ trials. ({\bf Top}) Sample predictions of the three methods trained on $100$ points and compared with the ground truth.}
\label{fig:synth-boxplot}
\end{figure}

\section{Experiments}
\label{sec:experiments}
{\bf Synthetic Dataset}. We considered a model of the form $y = f^*(x) + \epsilon$ with $f^*:\X\to\C$ and $\epsilon \sim \N(0,\sigma I)$ noise sampled according to a normal isotropic distribution. We considered the case of two tasks constrained on $\C\subset\R^2$, either a circumference or a lemniscate (see Fig.~\ref{fig:synth-boxplot}), identified by the equations $\gamma_{\textrm{circ}}(y) = y_1^2+y_2^2 - 1 = 0$ and $\gamma_{\textrm{lemn}}(y) = y_1^4 - (y_1^2 - y_2^2) = 0$
with $y = (y_1,y_2)\in\R^2$. We set $\X = [-\pi,\pi]$ and chose $f^*:[-\pi,\pi]\to\R^2$ to be parametric functions associated to the circumference and Lemniscate respectively, namely $f_{\textrm{circ}}^*(x) = (\cos(x),~ \sin(x))$ and $f_{\textrm{lemn}}^*(x) = (\sin(x), ~ \sin(2x))$.
We sampled from $10$ to $1000$ points for training and $1000$ for testing, with noise $\sigma = 0.05$.

We trained and tested different models over $10$ trials. We used a Gaussian kernel on the input and chose the corresponding bandwidth and the regularization parameter $\lambda$ by hold-out cross-validation on $30\%$ of the training set for $30$ values of the two hyperparameters sampled respectively from the ranges $[0.01,100]$ and $[1e-9,1]$ with logarithmic spacing. Fig.~\ref{fig:synth-boxplot} (Bottom) reports the mean square error (MSE) of the nonlinear MTL approach (NL-MTL) compared with the standard least squares single task learning (STL) baseline and the multitask relations learning (MTRL) from \cite{zhang10}, which encourages tasks to be linearly dependent. For both the circumference and Lemniscate, the tasks are strongly {\em nonlinearly} related. As a consequence our approach consistently outperforms its two competitors which assume only linear relations (or none at all). Fig.~\ref{fig:synth-boxplot} (Top) provides a qualitative comparison on the three methods (when trained with $100$ examples) during a single trial.\\
\begin{table}[t]
\scriptsize
\caption{Explained variance of the robust (NL-MTL[R]) and perturbed (NL-MTL[P]) variants of nonlinear MTL, compared with linear MTL methods on the Sarcos dataset  reported from \cite{jawanpuria2015}.}\label{tab:real-data}
\begin{center}
\begin{tabular}{rcccccccc}
\toprule

                           &  {\bf STL} & {\bf MTL}\cite{evgeniou2004} & {\bf CMTL}\cite{jacob08} & {\bf MTRL}\cite{zhang10} & {\bf MTFL}\cite{jawanpuria2012} & {\bf FMTL}\cite{jawanpuria2015} & {\bf NL-MTL[R]} & {\bf NL-MTL[P]} \\
\midrule
{\bf Expl.}                         & $40.5$ & $34.5$ & $33.0$ & $41.6$ & $49.9$ & $50.3$ & $\mathbf{55.4}$ & $54.6$ \\
{\bf Var. (\%)} & $\pm 7.6$ & $ \pm  10.2$ & $\pm  13.4$ & $\pm  7.1$ & $\pm  6.3$ & $\pm 5.8$ & $\mathbf{\pm 6.5}$ & $\pm 5.1$ \\
\hline
\end{tabular}
\end{center}
\end{table}
%

\noindent{\bf Sarcos Dataset}. We next report experiments on the Sarcos dataset~\cite{rasmussen2006}. The goal is to predict the torque measured at each joint of a $7$ degrees-of-freedom robotic arm, given the current state, velocities and accelerations measured at each joint ($7$ tasks/torques for $21$-dimensional input). We used the $10$ dataset splits available online for the dataset in \cite{jawanpuria2012}, each containing $2000$ examples per task with $15$ examples used for training/validation while the rest is used to measure errors in terms of the {\em explained variance}, namely $1$ - nMSE (as a percentage). In order to be fair with results in \cite{jawanpuria2012} we used the canonical linear kernel on the input. We chose the hyperparamters each from $30$ values sampled in the ranges $[1e^{-5},1e^5]$ for $\delta$ and $\mu$ and $[1e^{-9},1]$ for $\lambda$ with logarithmic spacing.

Table~\ref{tab:real-data} reports results from \cite{jawanpuria2012,jawanpuria2015} for a wide range of previous {\em linear} MTL methods \cite{evgeniou2004,jacob08,argyriou2007,zhang10,jawanpuria2012,jawanpuria2015}, together with our nonlinear MTL approach (both the robust and perturbed versions). Since, we were not able to find the model parameters of the Sarcos robot online, we approximated the constraint set $\C$ as a point cloud by collecting $1000$ random output vectors that neither belonged to the training nor to the test sets by sampling them from the original sarcos dataset \cite{rasmussen2006}. As can be noticed, the approach proposed in this paper clearly outperforms the ``linear'' competitors. Note that indeed, the relations among the torques measured at different joints of a robot are highly nonlinear (see for instance \cite{sciavicco1996}) and therefore taking such structure into account can be beneficial to the learning process.\\

\noindent{\bf Ranking by Pair-wise Comparison.} We considered a ranking prediction problem, formulated within a non-linear multi-task learning setting. In particular, given a database of $D$ documents, the goal is to learn $T = D(D-1)/2$ functions $f_{p,q}:\X\to\{-1,0,1\}$, one for each pair of documents $p$ and $q$ from $1$ to $D$, predicting if one document is more relevant than the other for a given input query $x$. Recommender systems are a natural application in this setting, and here we discuss an experiment on the Movielens100k \cite{harper2015} dataset for movie recommendation, where movies correspond to documents and queries to users. 

The problem can be tackled as multi-label MTL with $0$-$1$ loss: for a given training query $x$ only some pairwise comparisons $y_{p,q}\in\{-1,0,1\}$ are available (with $1$ corresponding to movie $p$ being more relevant than $q$, $-1$ the opposite and $0$ that the two movies are equivalent). Errors are measured according to the loss function
\eqal{\label{eq:binary_ranking}
  \loss_{pairwise}(c,y) = \sum_{p = 1}^{D-1}\sum_{q=p+1}^D ~\boldsymbol 1_{\{c_{p,q} ~\neq~ y_{p,q}\} }
}
with $\boldsymbol 1_{\{c_{p,q} \neq y_{p,q}\}} = 1$ if $c_{p,q} \neq y_{p,q}$ and zero otherwise. We have  therefore $T$ separate training sets, one for each task (i.e. pair of movies). Clearly, not all possible combinations of outputs $f:\X\to\{-1,0,1\}^T$ are allowed: the comparisons need to be logically consistent (e.g. if $p \succ q$ (read ``$p$ more relevant than $q$'') and $q \succ r$, then we cannot have $r \succ p$). As shown in \cite{duchi2010} these constraints are naturally encoded in a set $\mathcal C = DAG(D)$ in $\R^T$ of all vectors $G\in\R^T$ that correspond to (the vectorized, upper triangular part of the adjacency matrix of) a Directed Acyclic Graph with $D$ vertices. This leads to the NL-MTL estimator, 
\eqal{\label{eq:estimator_binary_ranking}
\hat{f}(x) = \argmin_{c \in DAG(D)} ~ \sum_{i=1}^n \sum_{p=1}^{D-1}\sum_{q = p + 1}^D \alpha_{i,(p,q)}(x) ~\boldsymbol 1_{\{c_{p,q} ~\neq~y_{p,q}\}}
}
where each score function $\alpha_{i,(p,q)}(x)$ is learned according to \eqref{eq:estimator-multitask} using only the training inputs for which the pairwise comparison $y_{p,q}$ is available. As already observed in \cite{ciliberto2016}, the above optimization over $DAG(D)$ can be formulated as a {\em minimal feedback Arc Set} problem on graphs \cite{slater1961} that can be addressed using approximation methods \cite{eades1993}.

As a final remark, note that in the ranking prediction settings of \cite{duchi2010,ciliberto2016}, the authors considered a {\em weighted} version of the binary loss at \eqref{eq:binary_ranking} of the form
$$
\loss_{rank}(c,y) = \sum_{p = 1}^{D-1}\sum_{q = p + 1}^D  |y_{p,q}| ~\boldsymbol 1_{\{c_{p,q}~ \neq ~\textrm{sign}(y_{p,q})\}}
$$
where $y_{p,q}\in\R$ is a scalar value measuring the discrepancy between documents $q$ and $q$ and is used to weight misclassification errors according to the relative importance between documents $p$ and $q$. Indeed, in the Movielens100k dataset, each movie $p$ was assigned a rating by the user (a discrete value $r_p$ from $1$ to $5$) and in \cite{duchi2010} the weights in $\loss_{rank}$ were chosen as $y_{p,q} = r_p - r_q$. It is clear that this variant of $\loss_{pairwise}$ leads to an NL-MTL estimator analogous to the one in \eqref{eq:estimator_binary_ranking}, for which the optimization over $\C = DAG(D)$ can still be tackled with minimal feedback arc set solvers. In particular for our experiments we used the publicly avaialable igraph library\footnote{\url{http://igraph.org/}} which implements a number of standard algorithms to solve problems on graphs. 

We performed experiments on Movielens100k to compare our MTL estimator with both standard multi-task learning baseline as well as with methods specifically designed to address ranking problem. In particular, we used the linear kernel and the train, validation and test splits adopted in \cite{ciliberto2016} to perform $10$ independent trials with 5-fold cross-validation for model selection. In Tab.~\ref{tab:ranking} we report the average ranking error and standard deviation of the weighed $0$-$1$ loss function considered in \cite{duchi2010,ciliberto2016} for the ranking methods proposed in \cite{herbrich1999,dekel2004,duchi2010}, the SVMStruct estimator \cite{tsochantaridis2005}, the SELF estimator considered in \cite{ciliberto2016} for ranking, the MTRL predictor and the STL baseline, corresponding to individual SVMs trained for each pairwise comparison. Results for previous methods are reported from \cite{ciliberto2016}.

\begin{table}[t]
\scriptsize
\caption{Rank prediction error according to the weighted binary loss in \cite{duchi2010,ciliberto2016}.}\label{tab:ranking}
\begin{center}
\begin{tabular}{rcccccccc}
\toprule

           &  {\bf NL-MTL} & {\bf SELF}\cite{ciliberto2016} & {\bf Linear} \cite{duchi2010} & {\bf Hinge} \cite{herbrich1999} & {\bf Logistic} \cite{dekel2004} & {\bf SVMStruct} \cite{tsochantaridis2005} & {\bf STL} & {\bf MTRL}\cite{zhang10}\\
\midrule
{\bf Rank} & $\mathbf{0.321}$ & $0.396$ & $0.430$ & $0.432$ & $0.432$ & $0.451$ & $0.581$ & $0.613$ \\
{\bf Loss} & $\pm \mathbf{0.004}$ & $ \pm  0.003$ & $\pm 0.004$ & $\pm 0.008$ & $\pm 0.012$ & $\pm 0.008$ & $0.003$ & $\pm 0.005$ \\
\hline
\end{tabular}
\end{center}
\end{table}

As can be observed, NL-MTL outperforms all competitors, achieving better performance than the the original SELF estimator. Indeed it can be shown that \cite{ciliberto2016} tackles pairwise-based ranking in a vector-valued fashion, by filling missing observations with $0$. This has a negative effect, artificially biasing predictions towards $0$. Conversely, NL-MLT exploits the true MTL nature of the problem leading to superior performance. As expected, STL and MTRL are clearly not suited for this setting, since they miss the critical information contained in the constraint set.

\section{Discussion}\label{sec:discussion}
We studied the problem of learning multiple tasks which are nonlinearly related one to the other. We adopted a structured prediction perspective to address the case where relations identify a set of constraints in the space of all possible outputs. In particular, we extended a recent approach to structured prediction to MTL, deriving a novel algorithm for which we proved universal consistency and generalization bounds. A main direction of future research will be to extend the algorithm presented here in order to learn the {\em nonlinear relations}, in line with previous work from the linear MTL literature. To this end, approaches focused on manifold-approximation or set learning could offer a viable direction for future research.

\section*{Acknowledgments}

This work was supported in part by EPSRC grant EP/P009069/1.

{%
\small
\bibliographystyle{natbib}
\putbib[./example_paper]
}
\end{bibunit}

\newpage

\appendix

\begin{bibunit}[unsrt]

\section*{Appendix}

Here we collect the proofs of the main results presented in the paper.

\section{Theoretical Analysis}

In the following we will assume $\X, \C$ to be Polish spaces, namely separable complete metrizable spaces, equipped with the associated Borel sigma-algebra. In particular we will restrict to the case where $\C$ is a subset of $\Y^T$ with $\Y = [a,b]$ and $a,b \in \R$. For any $y\in\Y^T$ we denote with $y_t$ the $t$-th element of $y$. In the rest of the section, we assume that there exists a probability measure $\rho_\X = \rho$ on $\X$ such that $\rho_t(y,x) = \rho_t(y|x)\rho(x)$ for all $t=1,\dots,T$.

\subsection{Characterization of the Nonlinear MTL Estimator}

In order to prove that a solution of the nonlinear MTL problem has the form described in \autoref{prop:nl-mtl-solution}, namely 
\[
  f^*(x) = \argmin_{c\in\C}~ \sum_{t=1}^T ~ \ip{~ \psi(c_t)~}{V ~ g_t^*(x)~ }_\H \qquad g_t^*(x) = \int_\Y \psi(y)~d\rho_t(y|x)
\]
we start by providing an alternative characterization for the expected risk $\E(f)$ in terms of the $g_t^*:\X\to\H$. 

\begin{lemma}\label{lemma:self-characterization-E}
Let $\ell:\Y\times\Y\to\R$ be SELF with $\ell(y,y') = \ip{\psi(y)}{V\psi(y')}_\H$ for all $y,y'\in\Y$. Then, for any measurable function $f:\X \to \C$, the expected risk $\E(f)$ defined at \eqref{eq:expected_risk} is equal to
\eqal{\label{eq:alternative_expected_risk}
  \E(f) =  \frac{1}{T} \int_\X\sum_{t=1}^T  \left\langle \psi(f_t(x)), ~V~g_t^*(x) \right\rangle_\H d\rho(x)
}
Note that by the definition of SELF (\autoref{asm:self}), $\psi$ is continuous on $\Y$ and therefore $g_t$ is measurable and $\|g^*_t(x)\|_\H$ bounded a.e. on $\X$ for all $t=1,\dots,T$.
\end{lemma}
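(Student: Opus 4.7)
The plan is to unfold the definition of $\E(f)$, substitute the SELF representation, and then use linearity and Bochner-integration properties to pull the integral over $y$ inside the inner product, producing the conditional mean $g_t^*(x)$.

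First I would write
\eqals{
  \E(f) = \frac{1}{T}\sum_{t=1}^T \int_{\X\times\R} \ell(f_t(x),y)\,d\rho_t(x,y)
        = \frac{1}{T}\sum_{t=1}^T \int_\X \int_\R \ell(f_t(x),y)\,d\rho_t(y|x)\,d\rho(x),
}
using the disintegration $\rho_t(x,y) = \rho_t(y|x)\,d\rho(x)$ (valid since $\X,\Y$ are Polish). Then I would substitute the SELF representation $\ell(f_t(x),y) = \langle \psi(f_t(x)), V\psi(y)\rangle_\H$ into the inner integral.

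Next, for fixed $x$, I would pull $\psi(f_t(x))$ and $V$ out of the integral over $y$. Since $\Y = [a,b]$ is compact and $\psi:\Y\to\H$ is continuous (by Definition of SELF), $\|\psi(y)\|_\H$ is uniformly bounded on $\Y$, so the $\H$-valued map $y\mapsto \psi(y)$ is Bochner-integrable against the probability measure $\rho_t(\cdot|x)$. Continuity of $V$ and of the inner product (equivalently, the defining property of the Bochner integral commuting with bounded linear functionals) give
\eqals{
  \int_\R \langle \psi(f_t(x)), V\psi(y)\rangle_\H \, d\rho_t(y|x)
  = \Big\langle \psi(f_t(x)),\ V \int_\R \psi(y)\,d\rho_t(y|x) \Big\rangle_\H
  = \langle \psi(f_t(x)), V g_t^*(x)\rangle_\H,
}
which is exactly the integrand in \eqref{eq:alternative_expected_risk}.

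Finally, I would record the measurability and integrability facts needed for the outer integral over $\X$: since $\psi$ is continuous, $g_t^*$ is (weakly, hence strongly) measurable as a Bochner conditional expectation and $\|g_t^*(x)\|_\H \le \sup_{y\in\Y}\|\psi(y)\|_\H < \infty$ almost everywhere; together with measurability of $f_t$ and continuity of $\psi$ and $V$, the integrand $x\mapsto \langle \psi(f_t(x)), V g_t^*(x)\rangle_\H$ is measurable and bounded on $\X$, so Fubini applies and summing over $t$ yields the claim. The only mild subtlety is the Bochner-integral exchange, but it is routine given the compactness of $\Y$ and continuity of $\psi$ and $V$; I do not expect any genuine obstacle.
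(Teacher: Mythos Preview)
Your proposal is correct and follows essentially the same approach as the paper's proof: disintegrate $\rho_t$, substitute the SELF representation, and use linearity to pull the $y$-integral inside the inner product to produce $g_t^*(x)$. Your version is slightly more detailed in justifying the Bochner-integral exchange and the measurability of the resulting integrand, but the argument is the same.
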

\begin{proof}
\eqal{
    \E(f)  & =  \frac{1}{T} \sum_{t=1}^T \int_{\X \times \R} \ell(f_t(x), y) ~ d\rho_t(y,x) \\
    & =  \frac{1}{T} \sum_{t=1}^T \int_{\X \times \R}\ip{\psi_t(f_t(x))}{V \psi(y)}_\H ~ d\rho_t(y,x) \\
    & =  \frac{1}{T} \sum_{t=1}^T \int_{\X} \int_\R \left\langle\psi(f_t(x)),~ V \psi(y)\right\rangle_\H ~ d\rho_t(y|x) ~ d\rho(x) \\
    & = \frac{1}{T}\int_\X \sum_{t=1}^T  \left\langle \psi(f_t(x)), ~V\int_\R \psi(y) d\rho_t(y|x) \right\rangle_\H d\rho(x) \\
    & = \frac{1}{T} \int_\X \sum_{t=1}^T \left\langle \psi(f_t(x)), ~V ~g_t^*(x) \right\rangle_\H d\rho(x).
}
\end{proof}

The result above implies that if there exists a function $f^*:\X\to\C$ that minimizes the argument $\sum_{t=1}^T \ip{\psi(f(x)}{V g_t^*}$ in the integral at \eqref{eq:alternative_expected_risk} almost everywhere on $\X$, then $f^*$ minimizes also the expected risk $\E(f)$. The following result guarantees the existence of such a function.


\PNLMTLSolution*

The result is a direct corollary of the following.
\begin{lemma}[$f^*$ is a minimizer of $\cal E$]\label{lemma:fstar-minimize-E}
Let $g_1, \dots, g_T: \X \to \H$ be measurable functions with $\|g_t(x)\|_\H$ bounded a.e. and let $\bar{\E}$ be defined as
$$\bar{\cal E}(f) =  \int_\X r(x, f(x)) d\rho(x), \quad \textrm{with} \quad r(x, c) := \frac{1}{T} \sum_{t=1}^T  \left\langle \psi(c_t), ~V~g_t(x) \right\rangle_\HY$$
If $\ell: \Y \times \Y \to \R$ is continuous, then there exists a measurable selector 
$f^\circ(x) \in \argmin_{c \in \C} r(x,c)$
and a measurable function $m(x) = \min_{c \in \C} r(x,c)$ a.e. such that, we have
$$ \bar{\cal E}(f^\circ) = \int_X m(x) d\rho(x) = \inf_{f: \X \to \C} \bar{\cal E}(f),$$
In particular, by selecting $g_t := g^*_t$ of \eqref{eq:nl-mtl-solution}, we have that $\bar{\E}$ is equal to the expected risk $\E(f)$ in \eqref{eq:expected_risk}, $f^\circ$ is equal to $f^*$ of \eqref{eq:nl-mtl-solution} and minimizes $\E$.
\end{lemma}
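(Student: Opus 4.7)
The plan is to reduce the lemma to a standard measurable selection argument of Kuratowski--Ryll-Nardzewski (Aumann) type. The key observation is that the integrand $r(x,c)$ is a Carath\'eodory function on $\X \times \C$ (measurable in $x$, continuous in $c$), and that $\C$ is a compact Polish space since the appendix restricts attention to $\C \subseteq [a,b]^T$ with $\C$ closed. Given these two ingredients, a measurable selector $f^\circ$ for the minimization $\argmin_{c \in \C} r(x,c)$ exists, the value function $m(x)$ is measurable, and the integrated inequality $\bar{\E}(f) \geq \bar{\E}(f^\circ)$ for any measurable $f:\X \to \C$ follows from the pointwise bound.

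First I would verify the Carath\'eodory property of $r$. For continuity in $c$: by the SELF assumption the feature map $\psi:\Y\to\H$ is continuous, $V$ is a bounded linear operator, and coordinate projections $c \mapsto c_t$ are continuous; composing with the continuous inner product $\ip{\cdot}{V g_t(x)}_\H$ and summing over $t$ shows $c \mapsto r(x,c)$ is continuous on $\C$ for every fixed $x$. For measurability in $x$: by hypothesis each $g_t:\X\to\H$ is measurable, and $u \mapsto \ip{\psi(c_t)}{V u}_\H$ is a continuous linear functional on $\H$, so the composition $x \mapsto \ip{\psi(c_t)}{V g_t(x)}_\H$ is measurable for each fixed $c$, and hence so is $r(\cdot,c)$.

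Next I would invoke the measurable selection theorem. Since $\C$ is compact, continuity of $r(x,\cdot)$ guarantees that the set-valued map $F(x) := \argmin_{c\in\C} r(x,c)$ is non-empty and closed-valued for every $x$. Combined with joint measurability in $x$ (from the Carath\'eodory property), $F$ admits a measurable selector $f^\circ:\X \to \C$ by Kuratowski--Ryll-Nardzewski. The value function $m(x) = r(x,f^\circ(x))$ is then measurable as a composition of measurable maps; equivalently, one can obtain $m$ as the countable infimum $m(x) = \inf_{c \in D} r(x,c)$ over a dense countable subset $D \subseteq \C$, exploiting continuity of $r(x,\cdot)$. Now for any measurable $f:\X\to\C$ the pointwise inequality $r(x,f(x)) \geq m(x) = r(x,f^\circ(x))$ holds, and the Cauchy--Schwarz bound $|r(x,c)| \leq \|V\| \sup_{c'\in\C}\|\psi(c'_t)\|_\H \cdot \frac{1}{T}\sum_t \|g_t(x)\|_\H$ is integrable because $\psi$ is bounded on the compact $\C$ and $\|g_t(x)\|_\H$ is bounded a.e.\ by hypothesis; integrating gives $\bar{\E}(f) \geq \int_\X m\, d\rho = \bar{\E}(f^\circ)$, so $f^\circ$ achieves the infimum. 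Finally, specializing $g_t = g_t^*$, \lemref{lemma:self-characterization-E} identifies $\bar{\E}$ with $\E$, identifies $f^\circ$ with $f^*$ of \eqref{eq:nl-mtl-solution}, and completes the claim for the expected risk.

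The main obstacle is ensuring the measurability hypotheses of the selection theorem are actually satisfied in the Hilbert-space setting; in particular one must be careful that the Carath\'eodory property of $r$ is strong enough to apply Kuratowski--Ryll-Nardzewski (which needs either a weakly-measurable closed-valued correspondence or an equivalent joint-measurability statement). The compactness of $\C$ (hence Polishness and non-emptiness of $\argmin$) together with continuity of $r(x,\cdot)$ does give the needed measurability of the graph of $F$, but this reduction to a countable dense argument via continuity is the Aumann-style step explicitly referenced in the body of the paper and should be spelled out rather than invoked as a black box.
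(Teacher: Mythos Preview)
Your proposal is correct and follows essentially the same route as the paper: verify that $r(x,c)$ is a Carath\'eodory function, use compactness of $\C$ to invoke Aumann's measurable selection principle (the paper cites Lemma~A.3.18 in \cite{steinwart2008}), and integrate the resulting pointwise inequality. Your treatment is in fact more explicit than the paper's, which asserts the Carath\'eodory property in one line and does not spell out the integrability bound.
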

\begin{proof}
Note that $r(x,c)$ is Charat\'eodory since $\ell$ is continuous and the $g_t$ are measurable. Then, by the compactness of $\C$, we can invoke {\em Aumann's Measurable Selection Principle} (see e.g. Lemma A.$3$.$18$ in \cite{steinwart2008}), to guarantee that $m$ is measurable and there exists a measurable $f^\circ:\X \to \C$ such that $r(x,f^\circ(x)) = m(x)$ for any $x \in \X$. Moreover, by definition of $m$, we have $r(x, f^\circ(x)) = m(x) \leq r(x,f(x))$ a.e. on $\X$ for any measurable function $f:\X \to \C$.
So we have
$$\bar{\cal E}(f^\circ) = \int_\X r(x, f^\circ(x))d\rho(x)  = \int_\X m(x) d\rho(x) \leq \int_\X r(x,f(x)) d\rho(x) = {\cal E}(f),$$
for any measurable function $f:\X \to \C$. Then ${\cal E}(f) = \inf_{f:\X \to \C} {\cal E}(f)$.
\end{proof}

\noindent{\bf Nonlinear Multitask Learning}. Following the intuition provided in \secref{sec:self-overview} for the original SELF algorithm, a natural approach to approximate $f^*:\X\to\C$ is to consider the estimators $\widehat g_t:\X\to\H$ for the individual $g^*_t:\X\to\H$ and then define $\widehat f:\X\to\C$ such that
\eqal{\label{eq:new-form-fhat}
  \widehat f(x) = \argmin_{c \in \C} \sum_{t=1}^T \left\langle ~\psi(c_t)~, ~V ~\widehat g_t(x)~ \right\rangle_\H
}
In particular, the following Lemma provides the characterization of $\widehat f$ for the case where the $\widehat g_t$ are the solution of kernel ridge regression applied independently for each task $t=1,\dots,T$ with training set $(x_{it},\psi(y_{it}))_{i=1}^{n_t}$
\[
  \widehat g_t(x) = \argmin_{g\in \H \otimes {\cal G}} \frac{1}{n_t} \sum_{i=1}^{n_t} \|\psi(y_{it}) - g(x_{it})\|^2 + \lambda \|g\|_{\H \otimes {\cal G}}^2
\]
where $\mathcal{G}$ is a reproducing kernel Hilbert space with associated reproducing kernel $k:\X\times\X\to\R$. Recall that since $\H$ and $\mathcal{G}$ are reproducing kernel Hilbert spaces, also $\H\otimes\mathcal{G}$ is. 

\PLossTrick*

\begin{proof}
First, note that the solution of the problem above has the form of
$$\widehat g_t(x) = \sum_{i=1}^{n_t} \alpha_{it}(x) \psi(y_{it})$$
with $\alpha_{it}:\X\to\R$ defined as in \eqref{eq:estimator-multitask} (This is a standard result when $\H$ is finite dimensional, see \cite{ciliberto2016} Lemma~$17$ and Eq.$(88)$-$(89)$ for the infinite dimensional case. Then, according to the SELF characterization of the loss $\ell$ in terms of $\psi$ and $V$, we have for any $x \in \X$
\eqal{
\frac{1}{T} \sum_{t=1}^T \sum_{i=1}^{n_t} 
\alpha_{it}(x) \ell(c_t,y_{it}) & = \frac{1}{T} \sum_{t=1}^T \sum_{i=1}^{n_t} 
\alpha_{it}(x) \ip{\psi(c_t)}{V y_{it}}_H \\
& = \frac{1}{T} \sum_{t=1}^T \ip{\psi(c_t)}{V \sum_{i=1}^{n_t} 
\alpha_{it}(x) y_{it}}_H \\
& = \frac{1}{T} \sum_{t=1}^T \ip{\psi(c_t)}{V \widehat{g}_t(x)}_H.
}
\end{proof}

In the following section we study the generalization properties of such estimator as the number of examples per task grows.

\subsection{A Comparison Inequality for Multitask Learning}

In this section we prove the comparison inequality of \autoref{teo:comparison-inequality}, which is key to study the generalization performance of nonlinear MTL. 
 
\TComparison*

\begin{proof}

Recall that for any measurable $g_t:\X\to\H$, 
\[
  \|g_t^* - g_t\|_{L^2(\X,\H,\rho)}^2 := \int \|g_t^*(x) - g_t(x)\|^2_\H d\rho(x).
\]
Let $\bar{\cal E}$ be defined as
$$
\bar{\cal E}(f) = \int_\X \frac{1}{T} \sum_{t=1}^T \left\langle \psi(f(x)_t), ~V~\widehat g_t(x) \right\rangle_\HY d\rho(x),
$$
for any measurable function $f: \X \to \C$. We have
\eqal{
    \E(f) - \E(f^*) = \E(f) - \bar{\cal E}(f) + \bar{\cal E}(f) -\E(f^*) = A + B
}
We will bound separately the terms $A$ and $B$.
By the characterization of ${\cal E}$ in Lemma~\ref{lemma:self-characterization-E}, we have
\eqal{
    A = \E(f) - \bar \E(f) & = \frac{1}{T} \int_\X \sum_{t=1}^T \ip{ \psi(f(x)_t)}{V (g_t^*(x) - \widehat g_t(x))}_\H d\rho(x) \\
    & \leq \frac{1}{T} \int_\X \sum_{t=1}^T \|V^*\psi(f(x)_t)\|_\H \|g_t^*(x) -  \widehat g_t(x)\|_\H d\rho(x) \\
    & \leq  \frac{1}{T} \sum_{t=1}^T \sup_{c \in \C} \|V^*\psi(c_t)\|_\H \int_\X \|g_t^*(x) -  \widehat g_t(x)\|_\H d\rho(x) \\
    & = \frac{1}{T} \sum_{t=1}^T \sup_{c \in \C} \|V^*\psi(c_t)\|_\H \sqrt{\int_\X \|g_t^*(x) -  \widehat g_t(x)\|_\H^2 d\rho(x)} \\
    & =  \frac{1}{T} \sqrt{\sum_{t=1}^T \sup_{c \in \C} \|V^*\psi(c_t)\|^2_\H} \sqrt{\sum_{t=1}^T \int_\X \|g_t^*(x) -  \widehat g_t(x)\|_\H^2 d\rho(x)}.
}
Since, by Lemma~\ref{lemma:fstar-minimize-E} $f^*$ is a minimizer of ${\cal E}$ and with the same reasoning $f$ is a minimizer of $\bar{\cal E}$, we have 
\eqal{
    B &= \bar \E(f) - \E(f^*) \\
    & = \int_\X \min_{c\in\C} \frac{1}{T}\sum_{t=1}^T \ip{\psi(c_t)}{V \widehat g_t(x)}_\H~d\rho(x) ~~-~~ \int\min_{c\in\C} \frac{1}{T}\sum_{t=1}^T \ip{\psi(c_t)}{V g^*_t(x)}_\H ~d\rho(x)\\
    & = \frac{1}{T}\int_\X \min_{c\in\C} \sum_{t=1}^T \ip{\psi(c_t)}{V \widehat g_t(x)}_\H~-~ \min_{c\in\C} \sum_{t=1}^T \ip{\psi(c_t)}{V g^*_t(x)}_\H ~ d\rho(x)\\
    & \leq \frac{1}{T}\int_\X \sup_{c\in\C} \left|\sum_{t=1}^T \ip{\psi(c_t)}{V  \widehat g_t(x)}_\HY - \sum_{t=1}^T \ip{\psi(c_t)}{V g^*_t(x)}_\H\right| ~ d\rho(x)\\
    &= \frac{1}{T}\int_\X \sup_{c\in\C} \left|\sum_{t=1}^T \ip{V^*\psi(c_t)}{(\widehat g_t(x) - g_t^*(x))}_\H\right| ~ d\rho(x)\\
    &\leq \frac{1}{T}\int_\X \sup_{c\in\C} \sum_{t=1}^T \|V^*\psi(c_t)\|_\H ~\|\widehat g_t(x) - g_t^*(x)\|_\H ~ d\rho(x)\\
    &\leq \frac{1}{T}\int_\X \sup_{c\in\C} \sqrt{\sum_{t=1}^T \|V^*\psi(c_t)\|^2_\H} ~\sqrt{\sum_{t=1}^T \|\widehat g_t(x) - g_t^*(x)\|^2_\H} ~ d\rho(x)\\
    &\leq \frac{1}{T} \sqrt{ \sup_{c\in\C}\sum_{t=1}^T \|V^*\psi(c_t)\|^2_\H} \sqrt{\int_\X \sum_{t=1}^T \|\widehat g_t(x) - g_t^*(x)\|^2_\H d\rho(x)}.
}
as desired. 
\end{proof}

\subsection{Consistency and Generalization Bounds}

The comparison inequality provided by \autoref{teo:comparison-inequality} allows to characterize the generalization properties of the estimator $\widehat f:\X\to\C$ by studying the functions $\widehat g_t:\X\to\H$. In particular, if $\widehat g_t \to g_t^*$ in $L_2$ for all $t=1,\dots,T$, the comparison inequality automatically guarantees that the excess risk $\E(\widehat f) - \E(f^*)\to0$. Moreover, if we are able to determine the rate for which the $\widehat g_t$ converge to the $g_t^*$, we can provide generalization bounds for the SELF estimator $\widehat f$.

The following results formalize these ideas, proving consistency and learning rates for $\widehat f$.

\Tuniversality*

\begin{proof}
By the comparison inequality in \autoref{teo:comparison-inequality} it is sufficient to prove that $\widehat g_t\to g_t^*$ in $L^2$ in probability as the number of training point increases. However note that $g_t^*:\X\to\H$ such that $g_t^*(x) = \int_\R \psi(y) ~d\rho_t(y|x)$ is the minimizer of the least squares expected risk 
\begin{equation}\label{eq:least-squares-expected-risk}
  \minimize{g:\X\to\H}~~\int_{\X\times\Y} \|g(x) - \psi(y)\|_\H^2 ~d\rho_t(x,y). 
\end{equation}
Indeed,
\begin{align*}
  \int_{\X\times\Y} \|g(x) - \psi(y)\|_\H^2 ~d\rho_t(x,y) & = \int_{\X\times\Y} \|g(x)\|_\H^2 - 2\ip{g(x)}{\psi(y)} ~d\rho_t(x,y) + const \\
  & = \int_{\X} \|g(x)\|_\H^2 - 2\ip{g(x)}{\int_\Y \psi(y) ~d\rho_t(y|x)} ~ d\rho_\X(x) + const\\
  & = \int_{\X} \|g(x)\|_\H^2 - 2\ip{g(x)}{g_t^*(x)} ~ d\rho_\X(x) + const\\
\end{align*}
Which is clearly minimized pointwise for each $x\in\X$ by $g(x) = g_t^*(x)$.

Therefore, in order to study the convergence of the kernel ridge regression estimator $\hat g_t$ to $g_t^*$ we can leverage on either standard results for kernel ridge regression (e.g. see \cite{caponnetto2007}) if $\H$ is finite dimensional or the result in \cite{ciliberto2016} (Lemma $18$ see Eq.~$96$) when $\H$ is infinite dimensional. Both analyses provide analogous bounds on $\|\widehat g_t - g_t^*\|_{L^2}$ with respect to the number of training examples $n_t$. In particular, the direct application of \cite{ciliberto2016} (Lemma $18$) to our setting, leads to the desired convergence of $\widehat g_t$ to $g_t^*$, and consequently of $\E(\widehat f)$ to $\E(f^*)$ as desired.
\end{proof}

\Tbound*

\begin{proof}
The same reasoning in the proof of \autoref{teo:universality} applies. In particular, since $g_t^*\in\H\otimes\mathcal G$ for every $t=1,\dots,T$, we can improve the bound in \cite[Lemma $18$]{ciliberto2016} analogously to what is done in \cite[Thm. $5$]{ciliberto2016}, leading, for any $\eta>0$, to the bound
\[
  \|g_t - g_t^*\|_{L^2} \leq h_{\ell,t} ~\eta^2 ~n_t^{-1/4}
\]
holding with probability at least $1-8 e^{-\eta}$, where
\[
  h_{\ell,t} = 12(\Psi + \kappa\|g_t^*\|_{\H\otimes\mathcal{G}} + \kappa) \qquad\textrm{with}\qquad \Psi = \sup_{y\in\Y = [a,b]} \|\psi(y)\|_\H ~~\textrm{and}~~ \kappa = \sqrt{\sup_{x\in\X} k(x,x)}
\]
is a constant not depending on $\C,N,n_t,\eta,\lambda_t$. By taking the intersection bound of these events, we have that the following holds with probability $1 - T 8 e^{-\eta}$
\[
  \sqrt{ \frac{1}{T} \sum_{t=1}^T \|g_t - g_t^*\|_{L^2}^2} \leq \sqrt{ \frac{1}{T} \sum_{t=1}^T h_{\ell,t}^2~ \eta^4 ~n_t^{-1/2} } \leq h_\ell~ \eta^2~ n_0^{-1/4}.
\]
with $h_{\ell} = \max_{t=1,\dots,T}~ h_{\ell,t}$. Then, by choosing $\eta := \tau\log(T)$ we have 
\[
  \sqrt{ \frac{1}{T} \sum_{t=1}^T \|g_t - g_t^*\|_{L^2}^2} \leq h_\ell ~ \tau^2 ~n_0^{-1/4}\log(T),
\]
with probability $1 - 8 e^{-\tau}$. Combining the equation above with the comparison inequality we have the desired generalization bound.
\end{proof}

We conclude proving the result studying the constant $q_{\C,\ell,T}$ in the case of $\C = [-B,B]^2$ and $\C = \mathcal S_T$ the sphere of radius B. 

\LQForSphere*

\begin{proof}
We begin by observing that the least square expected risk can be equivalently written as
\eqal{
\E(f) & = \min_{f:\X\to\C}~ \frac{1}{T} \sum_{t=1}^T \int_{\X\times\Y} (f(x) - y)^2 ~~d\rho_t(x,y)\\
 & = \min_{f:\X\to\C}~ \frac{1}{T} \sum_{t=1}^T \int_{\X\times\Y} f(x)^2 - f(x) y + y^2 ~~d\rho_t(x,y) \\
& = \min_{f:\X\to\C}~ \frac{1}{T} \sum_{t=1}^T \int_{\X\times\Y} f(x)^2 - 2 f(x) y ~~d\rho_t(x,y).
}
Therefore we can equivalently study the asymmetric loss $\ell:\R\times\R\to\R$ such that $\ell(y,y') = y^2 - 2 yy'$. Such loss can be written in SELF form $\ell(y,y') = \ip{\psi(y)}{V \psi(y')}_\H$ with $\H = \R^3$, 
\[
  \psi(y) = (y^2, y, 1)^\top \in \R^2 \qquad \textrm{and} \qquad V = \left(\begin{array}{ccc} 0 & 0 & 1 \\ 0 & -2 & 0 \\ 1 & 0 & 0 \end{array}\right).
\]
Therefore we have $\|V^* \psi(y)\|^2 = 4y^2 + y^4$ for all $y\in\R$. In particular we have that for every $c = (c_1,\dots,c_T)^\top\in\C\subseteq\R$
\[
  \sqrt{\frac{1}{T} \sum_{t=1}^T \|V^*\psi(c_t)\|_\H^2}  = \sqrt{\frac{1}{T} \sum_{t=1}^T 4c_t^2 + c_t^4}
  \]
Now we can provide the value for the constant $q_{\C,\ell,T}$ for the two case considered in this work. 

{\bf Case $\C = \mathcal B = [-B,B]^T$.} The supremum is achieved at the edges of the cube $\mathcal B$, e.g. $c = B(1,\dots,1)^\top\in\R^T$, namely
\[
  q_{\mathcal B,\ell,T} = 2 \sup_{c\in\C} ~ \sqrt{\frac{1}{T} \sum_{t=1}^T \|V^*\psi(c_t)\|_\H^2} = 2\sqrt{4B^2 + B^4} \leq 2\sqrt{5}~ B^2
\]

{\bf Case $\C = \mathcal S_{B,T}$.} the sphere of radius $B$ in $\R^T$ centered in zero. Since $\|c\| = B$ for any $c\in\mathcal S_B$ we have
\[
  q_{\mathcal S_{B,T},\ell} = 2 \sqrt{ \frac{4B^2 + B^4}{T} }
\]
Since $c_t^2 \leq B^2$ for all $c\in\mathcal S_{B,T}$ and $t=1,\dots,T$. Therefore $\sum_{t=1}^T c_t^4 \leq B^2 \sum_{t=1}^Tc_t = B^4$. However such value is achieved for $c = (B,0,\dots,0)^\top\in\R^T$, hence $q_{\C,\ell,T} = B \sqrt{4 + B^2}$. We conclude 
\[
 q(\mathcal S_{B,T},\ell) \leq 2\sqrt{5} ~ B^2 ~T^{-1/2}
\]
as desired.
\end{proof}

\section{(Most) MTL Loss Functions are SELF}
We conclude with a note on the results reported in \secref{sec:theory} characterizing sufficient conditions for the SELF property to be satisfied by either mono-variate loss functions or a separable loss sum of SELF functions. 
\PSmooth*
\begin{proof}
The result is a corollary of Thm. $19$ in \cite{ciliberto2016}. In particular $(i)$ is a direct application of Thm. $19$ point $2$, while $(ii)$ follows from the combination of Thm. $19$ point $4$, namely the fact that we are studing the separable loss on a compact subset of $\R^T$ and point $6$, implying that the sum of SELF functions is indeed SELF. 
\end{proof}

\section{Nonlinear Multitask learning and Square Loss}

We observed that for the square loss it is possible to derive a more compact and interpretable formulation for the solutions of the algorithms considered in this work (see \autoref{ex:ls-and-C} and \autoref{ex:ls-and-perturbations}). Here we show how these solution are derived in detail. In particular, in the notation of \autoref{ex:ls-and-perturbations} denote the solution $\widehat f_0:\X\to\C$ to the {\em exact} MTL problem with nonlinear constraints $\C$. $f_0$ needs to satisfy \eqref{eq:loss-trick}, where, for $\L(y,y') = \|y = y'\|^2$ the weighted sum $\sum_{i=1}^n \alpha_i(x)~\L(y,y_i)$ is
\eqals{\label{eq:ls-proj}
  \sum_{i=1}^n \alpha_i(x) \|c - y_i\|^2 = a(x) \|c\|^2 - 2 c^\top b(x) + const
}
with $a(x) = \sum_{i=1}^n \alpha_i(x)$ and $b(x) = \sum_{i=1}^n \alpha_i(x)y_i$. Therefore $f_0$ corresponds to the projection
\begin{equation}\label{eq:ls-and-C-solution}
\widehat f_0(x) = \argmin_{c\in\C}~ \left\|c - b(x)/a(x)\right\|^2 = \Pi_\C \left(b(x)/a(x)\right)
\end{equation}

\noindent{\bf Solution for the Robust $C_\delta$ in \autoref{ex:ls-and-perturbations}}. The algorithm with constraint set $\C_\delta$ takes the form 
\eqal{
    \widehat f_\delta(x) = \argmin_{c\in\C, \|r\|\leq\delta} ~ \sum_{i=1}^n \alpha_i(x) \|c + r - y_i\|^2.
}
Following the same reasoning as in \autoref{ex:ls-and-C}, we have that this problem is equivalent to 
\eqal{\label{eq:minimization-ls-slacked}
    \widehat f_\delta(x) = \argmin_{c\in\C, \|r\|\leq\delta} ~ \|c + r - b(x)/a(x)\|^2.
}
By solving \eqref{eq:minimization-ls-slacked} in $r$ we have
\eqal{
    r(c) = \argmin_{\|r\|\leq\delta} \|r - (b(x)/a(x) - c)\|^2
}
which is solved by
\eqal{
r(c) = (b(x)/a(x) - c) \min(1,\frac{\delta}{\|b(x)/a(x) - c\|}).
}
Substituting $r(c)$ in \eqref{eq:minimization-ls-slacked}, we have 
\eqal{
    c_\delta    & = \argmin_{c\in\C} \|c + r(c) - b(x)/a(x)\|^2 \\
                & = \argmin_{c\in\C} \|c - b(x)/a(x)\|^2 (1 - \min(1,\delta/\|c - b(x)/a(x)\|) )^2
}
which is minimized for $c_\delta = \Pi_\C(b(x)/a(x)) = \widehat f_0(x)$. Therefore, $\widehat f_\delta(x)$ is
\eqal{
    \widehat f_\delta(x) = c_\delta + r_\delta = \widehat f_0(x) + (b(x)/a(x) - \widehat f_0(x)) \min(1,\frac{\delta}{\|b(x)/a(x) - \widehat f_0(x)\|})
}
as stated in \autoref{ex:ls-and-perturbations}.\\

\noindent{\bf Solution for Perturbed loss $\L_\mu$ in \autoref{ex:ls-and-perturbations}}. For simplicity of notation denote $\delta = 1/\mu$. We have that the vector-valued learning algorithm with perturbed loss $\L_{1/\delta}$ is
\eqal{\label{eq:minimization-ls-perturbed}
    \widehat f_{1/\delta}(x) = \argmin_{c\in\C, r \in \R^M} ~ \sum_{i=1}^n \alpha_i(x) (\|c + r - y_i\|^2 + \delta\|r\|^2).
}
By deriving the functional w.r.t. $r$ and setting it to zero we have the equation, 
\eqal{
    \sum_{i=1}^n \alpha_i(x) ( (c + r - y_i) + \delta r) = 0 
}
which implies that the minimizer $r(c)\in\R^M$ of \eqref{eq:minimization-ls-perturbed} for any $c\in\C$ fixed, is
\eqal{
    r(c) = \frac{b(x) - a(x)c}{(1+\delta)a(x)}\in\R^M.
}
Now, plugging $r(c)$ into $\sum_{i=1}^n \alpha_i(x) \|c + r - y_i\|^2$, we have 
\eqal{\label{eq:perturbation-1}
    \sum_{i=1}^n  \alpha_i(x) & \|c + r(c) - y_i\|^2 = \sum_{i=1}^n \alpha_i(x) \| \frac{\delta c}{(1 + \delta)} - y_i + \frac{b(x)}{(1+\delta)a(x)}\|^2 \\
    & = \frac{\delta^2 a(x)}{(1 + \delta)^2}\|c\|^2 - 2 \frac{\delta }{(1+\delta)}c^\top \left(\sum_{i=1}^n \alpha_i(x) y_i - \frac{b(x)}{(1+\delta)}\right) + \textrm{const} \\
    & = \frac{\delta^2}{(1 + \delta)^2} \left(a(x)\|c\|^2 - 2 c^\top b(x) \right) + \textrm{const}
}
where we have used the fact that $b(x) = \sum_{i=1}^n \alpha_i(x)y_i$ and denoted with $\textrm{const}$ every addend not depending on $c$.

We now insert $r(c)$ in $\sum_{i=1}^n\alpha_i(x)\delta\|r\|^2$ and obtain
\eqal{\label{perturbation-1}
    \sum_{i=1}^n\alpha_i(x)\delta\|r\|^2 & = \frac{a(x)\delta}{(1 + \delta)^2a(x)^2}\|b(x) - a(x)c\|^2 \\
    & = \frac{\delta}{(1 + \delta)^2a(x)}\left(a(x)^2\|c\|^2 - 2 a(x) c^\top b(x)\right) + \textrm{const} \\
    & = \frac{\delta}{(1 + \delta)^2} (a(x)\|c\|^2 - 2 c^\top b(x)) + \textrm{const}.
}
We can therefore plug $r(c)$ into \eqref{eq:minimization-ls-perturbed}, obtaining 
\eqal{
    c_{1/\delta}    & = \argmin_{c\in\C} \sum_{i=1}^n \alpha_i(x) (\|c + r(c) - y\|^2 + \delta\|r(c)\|^2) \\
                    & = \argmin_{c\in\C} \frac{\delta}{(1 + \delta)} \left(a(x)\|c\|^2 - 2 c^\top b(x) \right) \\
}
which is minimized again for $c_{1/\delta} = \Pi_\C(b(x)/a(x)) =: \widehat f_0(x)$. By evaluating $r_{1/\delta} = r(c_{1/\delta})$ we have
\eqal{
    r_{1/\delta}(x) = \frac{b(x)/a(x) - \widehat f_0(x)}{1 + \delta}
}
and $\widehat f_{1/\delta}(x) = \widehat{f}_0(x) + r_{1/\delta}(x)$. Finally, by taking $\mu = 1/\delta$ we recover the solution at \autoref{ex:ls-and-perturbations}.

We conclude by reporting a closed form solution for the MTL estimator with perturbed loss functions $\ell_\mu^t$ for each task. The derivation is analogous to that for the vector-valued case. 

\begin{example}[Nonlinear MLT and Violations]
With the same notation as in \autoref{ex:ls-and-mtl} let us define $\bar a(x) = \sum_{t=1}^T \sum_{i=1}^{n_t} \alpha_{it}(x)$. Denote $\widehat f_0:\X\to\C$ the map such that for all $x\in\X$
\eqal{
    \widehat f_0(x) = \argmin_{c\in\C} \sum_{t=1}^T \frac{a_t(x)}{a_t(x) +  \bar a(x)/\mu} \left(c_t - \frac{b_t(x)}{a_t(x)}\right)^2
}
which corresponds to the projection of the vector $(b_1(x)/a_1(x),\dots,b_T(x)/a_T(x))$ onto $\C$ according to the diagonl matrix $\Sigma_x$ with diagonal elements $a_t(x)/(a_t(x) + \bar a(x)/\mu)$ for $t=1,\dots,T$.

The solution with perturbed loss functions $\ell_\mu^t$ is $f_\mu:\X\to\R^T$ such that for all $x\in\X$, the $t-th$ coordinate of $\widehat f_\mu(x)$ is 
\eqal{
    \widehat f_\mu (x)_t = \frac{b_t(x)/a_t(x) + \bar a(x)/\mu ~ \widehat f_0(x)}{a_t(x) + \bar a(x)/\mu}.
}
\end{example}




\end{bibunit}

\end{document}